\documentclass{article}

    \PassOptionsToPackage{numbers, compress}{natbib}

    \usepackage[final]{neurips_2024}

\usepackage[utf8]{inputenc} %
\usepackage[T1]{fontenc}    %
\usepackage{hyperref}       %
\usepackage{url}            %
\usepackage{booktabs}       %
\usepackage{amsfonts}       %
\usepackage{nicefrac}       %
\usepackage{microtype}      %
\usepackage{xcolor}         %

\usepackage{doi}
\usepackage{algorithm}
\usepackage{algorithmic}

\usepackage{amsmath}
\usepackage{amssymb}
\usepackage{mathtools}
\usepackage{amsthm}
\usepackage{tcolorbox}
\usepackage{lipsum}
\usepackage{enumitem}
\usepackage{thmtools}
\usepackage{thm-restate}
\usepackage{float}
\usepackage{wrapfig}

\newcommand{\proofspacing}{-9pt}
\newcommand{\proofspacingpost}{-2pt}

\newcommand{\commentblock}[1]{}

\newcommand{\Dn}{\mathcal{D}_n^{(\pi_b)}}
\newcommand{\dzero}{\eta}
\newcommand{\asto}{\overset{\text{a.s.}}{\longrightarrow}}

\newcommand{\RP}{\mathfrak{R}_\phi}
\newcommand{\hatRP}{\widehat{\mathfrak{R}}_{n,\phi}}
\newcommand{\hatpRP}{\widehat{\mathrm{P}}_{n,\phi}}
\newcommand{\hatrRP}{\widehat{\mathrm{R}}_{n,\phi}}
\newcommand{\hatdzeroRP}{\widehat{\eta}_{n,\phi}}
\newcommand{\sumt}{\sum_{t}}
\newcommand{\sumi}{\sum_{i=1}^n}
\newcommand{\sumnt}{\sum_{i,t}}
\newcommand{\methodname}{{\small \textsf{STAR}}}

\newcommand{\pRP}{\mathrm{P}_{\phi}}
\newcommand{\rRP}{\mathrm{R}_{\phi}}
\newcommand{\dzeroRP}{\dzero_{\phi}}

\newcommand{\customtiny}{\fontsize{4}{5}\selectfont}
\newcommand{\offphatRP}{\widehat{\mathfrak{R}}_{n, \phi}^{\customtiny{\pi_b \to \pi_e}}}
\newcommand{\offphatpRP}{\widehat{\mathrm{P}}_{n, \phi}^{\customtiny{\pi_b \to \pi_e}}}
\newcommand{\offphatrRP}{\widehat{\mathrm{R}}_{n, \phi}^{\customtiny{\pi_b \to \pi_e}}}
\newcommand{\offphatdzeroRP}{\widehat{\eta}_{n, \phi}^{\customtiny{\pi_b \to \pi_e}}}

\newcommand{\offphatRPclip}{\widehat{\mathfrak{R}}_{\phi, c}^{\customtiny{\pi_b \to \pi_e}}}
\newcommand{\offphatpRPclip}{\widehat{\mathrm{P}}_{\phi, c}^{\customtiny{\pi_b \to \pi_e}}}
\newcommand{\offphatrRPclip}{\widehat{\mathrm{R}}_{\phi, c}^{\customtiny{\pi_b \to \pi_e}}}
\newcommand{\offphatdzeroRPclip}{\widehat{\eta}_{\phi, c}^{\customtiny{\pi_b \to \pi_e}}}

\newcommand{\I}[1]{\mathbf{1}\{#1\}}
\newcommand{\E}{\mathbb{E}}

\newcommand{\Iti}[1]{\mathbf{1}_{t}^{i}\{#1\}}
\newcommand{\Ii}[1]{\mathbf{1}^{i}\{#1\}}

\newcommand{\clipt}{(t-c+1)^+}
\usepackage[textsize=tiny]{todonotes}

\theoremstyle{plain}
\newtheorem{theorem}{Theorem}[section]

\newtheorem{lemma}[theorem]{Lemma}
\newtheorem{property}[theorem]{Property}

\theoremstyle{definition}
\newtheorem{definition}[theorem]{Definition}
\newtheorem{assumption}[theorem]{Assumption}
\theoremstyle{remark}

\title{Abstract Reward Processes: Leveraging State Abstraction for Consistent Off-Policy Evaluation
}

\author{%
  Shreyas Chaudhari \\
  University of Massachusetts  \\
  \texttt{schaudhari@cs.umass.edu} 
  \And
  Ameet Deshpande\\
  Princeton University \\
  \texttt{asd@cs.princeton.edu}
  \AND
  Bruno Castro da Silva \\
  University of Massachusetts  \\
  \texttt{bsilva@cs.umass.edu} 
  \And
  Philip S. Thomas \\
  University of Massachusetts  \\
  \texttt{pthomas@cs.umass.edu}
}

\begin{document}

\maketitle

\begin{abstract}
  Evaluating policies using off-policy data is crucial for applying reinforcement learning to real-world problems such as healthcare and autonomous driving.
Previous methods for \emph{off-policy evaluation} (OPE) generally suffer from high variance or irreducible bias, leading to unacceptably high prediction errors.
In this work, we introduce \methodname, a framework for OPE that encompasses a broad range of estimators---which include existing OPE methods as special cases---that achieve lower mean squared prediction errors.
\methodname~leverages state abstraction to distill complex, potentially continuous problems into compact, discrete models which we call \emph{abstract reward processes} (ARPs).
Predictions from ARPs estimated from off-policy data are provably consistent (asymptotically correct).
Rather than proposing a specific estimator, we present a new framework for OPE and empirically demonstrate that estimators within \methodname~outperform existing methods.
The best \methodname~estimator outperforms baselines in all twelve cases studied, and even the median \methodname~estimator surpasses the baselines in seven out of the twelve cases.
\end{abstract}

\section{Introduction}

Within \textit{reinforcement learning} (RL), \textit{off-policy evaluation} (OPE) is the foundational challenge of evaluating the performance, $J(\pi)$, of policies $\pi$ that are different from the ones used to generate data.
OPE methods are a general-purpose tool that can be used as part of a local policy search algorithm \citep{schulman2015trust} to provide insight about policies similar to the current policy, or as a tool to evaluate policies without requiring their actual deployment for high-risk applications like those in healthcare \citep{murphy2001marginal}, education \citep{mandel2014offline,gao2023hope}, and recommendation systems \citep{bottou2013counterfactual,chapelle2011empirical}.
Despite many recent advances in OPE, existing methods struggle to give accurate predictions for many real-world applications \cite{tang2021model},
showing the need for new perspectives on OPE.

Existing methods can be broadly divided into two categories: \textit{importance sampling} (IS) based and model-based \citep{voloshin2019empirical}.
IS-based methods are typically consistent (i.e., their predictions converge probabilistically to the correct value in the limit as the amount of data approaches infinity),
but have variance that increases exponentially with the horizon \citep{li2015toward,liu2018breaking}.
Model-based methods have lower variance but often introduce bias due to model class mismatch and are not generally guaranteed to be
consistent \citep{marivate2015improved,farahmand2011model}.
A third set of methods, which we call \emph{mixture methods}, combine the predictions obtained from both of these categories \citep{jiang2016doubly,thomas2016data}.
However, in some cases, combining the predictions also combines the drawbacks---high variance and bias.
This leads us to ask:
\emph{Can we develop a framework for OPE that yields predictions that are both consistent and low variance?}

In this paper, we introduce a new framework that {attains this goal} by
combining the \textit{machinery} underlying IS-based and model-based approaches (not just their \textit{predictions}).
Our proposed framework is a fundamentally different approach to OPE that incorporates importance sampling \textit{into} model learning for OPE. %
Our approach is motivated by the intuition that humans build small mental models of their environment to plan and predict, selectively abstracting away information that is not relevant to the problem at hand \cite{tolman1948cognitive}.
Similarly, complex sequential decision processes can be distilled into compact models that hold sufficient information for (off-)policy evaluation.
Specifically, we propose creating small tabular models
(even for continuous environments), which we call \emph{abstract reward processes} (ARPs), customized for the problem at hand and for the policy being evaluated.
We call this framework for constructing a range of ARPs \emph{\underline{st}ate-\underline{a}bstract \underline{r}eward processes} (\methodname).

\paragraph{Idea Summary:}

\begin{figure}[t]
    \centering
    \includegraphics[width=\textwidth]{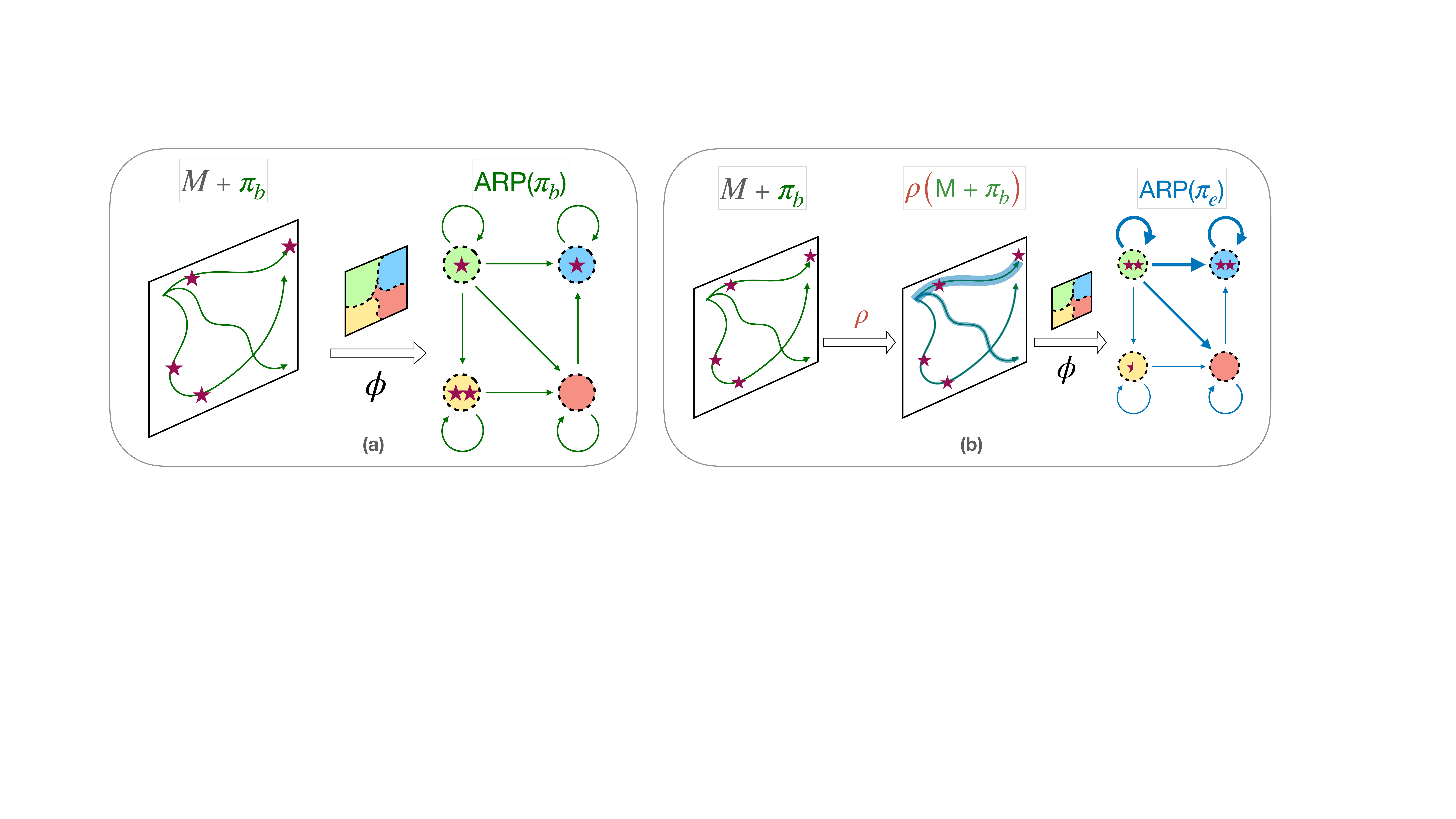}
    \caption{
        \textbf{(a):} MDP $M$ and policy $\pi_b$ are transformed into a discrete \textit{abstract reward process} (ARP) using a state abstraction function $\phi$.
        The ARP aggregates rewards (denoted by stars) and transition probabilities from all states that map to each abstract state.
        \textbf{(b):}
        A model of the ARP for the evaluation policy $\pi_e$ is constructed by:
        reweighting data generated by $\pi_b$  with importance weights $\rho$ (middle),
        applying the state abstraction function $\phi$, and
        performing weighted maximum likelihood estimation of the ARP (right).
        The expected return of a model of this ARP estimated from off-policy data is a \textit{consistent} estimator of the expected return of $\pi_e$.
    }
    \label{fig:paper_overview}
\end{figure}

A \textit{Markov decision process} (MDP) combined with a policy $\pi$ induces a Markov chain with rewards, called a \textit{Markov reward process} (MRP).
A model of this process can be estimated to evaluate policy $\pi$.
However, two main challenges arise: (1) like any model-based approach, estimating an MRP can introduce asymptotic bias if the chosen model class cannot represent the underlying MRP; and (2) the model of the MRP must be accurately estimated from \textit{off-policy} data, i.e., data generated by a behavior policy $\pi_b$
that differs from the evaluation policy $\pi_e$.
The proposed framework addresses both challenges by modeling a special instantiation of an MRP, as detailed next.

First, to address potential model class mismatch, we propose mapping the large and possibly continuous set of states of the MDP to a finite set of \textit{abstract states} using a discrete state abstraction function.
We then represent the resulting MRP defined over abstract states, referred to as the \textit{abstract reward process} (ARP), using tabular models.
Since the ARP is finite, tabular models can represent it accurately, as depicted in Figure \ref{fig:paper_overview}(a).
However, using a discrete state abstraction may lead to loss of state information that
can potentially introduce modeling errors.
Surprisingly, we prove that despite state information being abstracted away, the maximum likelihood model of an ARP estimated from \textit{on-policy} data provides consistent estimates of the expected return of the behavior policy $\pi_b$.%

Next, to estimate a model of the ARP
corresponding to $\pi_e$ from data generated by $\pi_b$,
we reweight occurrences of abstract states in the off-policy dataset using importance sampling, see Figure \ref{fig:paper_overview}(b).
In expectation, this has the effect of updating the abstract state visitation counts to reflect those resulting from the policy being evaluated.
We prove that the weighted maximum likelihood estimate of a model of the ARP estimated from this off-policy dataset provides consistent estimates of the performance of the evaluation policy.%
The integration of importance sampling \textit{into} model estimation permits a favorable interpretation of weight clipping for mitigating variance (see Section \ref{sec:low_variance_markov}). %

The \methodname~framework offers two adjustable knobs---the state abstraction function, and the amount of weight clipping---that instantiate a range of OPE estimators.
Varying the configurations of these knobs results in different bias-variance trade-offs for OPE, with existing OPE methods forming special cases of the range of estimators that lie within this framework.

\paragraph{Contributions: }

We empirically evaluate estimators instantiated in this framework on synthetic domains and a healthcare simulator built from real-world ICU data, where the best \methodname~estimator significantly outperforms baselines in all cases, and even the median \methodname~estimator surpasses baselines in seven out of twelve cases.
It must be emphasized that this work does not propose a specific estimator for OPE; rather, it introduces a fundamentally different framework that offers fresh insights on approaches for off-policy evaluation.
These insights open up exciting avenues for new research questions and future directions.
In this paper, we introduce:
\begin{enumerate}[itemsep=0pt, topsep=0pt, partopsep=0pt, leftmargin=*, label=(\arabic*)]
    \item the first model-based approach for OPE that guarantees asympotic correctness of the estimates without model class assumptions, even for continuous state MDPs (Theorem \ref{thm:ope_ARP}).
    \item the concept of \emph{abstract reward processes} for consistent OPE. ARPs abstract away the complexity of the underlying problem, and distill sufficient information for accurate policy evaluation (Theorem \ref{prop:ARP_pe}). Being finite, they can be consistently estimated.
    \item  a generalizing framework that provides a fresh perspective on OPE by merging the machinery of model-based and IS-based approaches.
          The framework offers two tunable knobs, various configurations of which instantiate a range of OPE estimators with varying bias-variance characteristics. Existing model-free and model-based methods are special cases in this framework.

\end{enumerate}
\section{Background and Notation}

An MDP is a tuple $M:= (\mathcal{S}, \mathcal{A}, p, r, \gamma, \dzero)$ where $\mathcal S$ is the set of states,
$S_t$ is the state at time $t \in \{0,1,\dotsc\}$,
$\mathcal A$ is the set actions,
$A_t$ is the action at time $t$,
$p:\mathcal S \times \mathcal A \times \mathcal S \to [0,1]$ is the \emph{transition function} that characterizes state transition dynamics according to $p(s,a,s'):=\Pr(S_{t+1}{=}s'|S_t{=}s,A_t{=}a)$,
$r:\mathcal S \times \mathcal A \to \mathbb R$ is the \emph{reward function} that characterizes rewards according to $r(s,a):=\mathbb E[R_t | S_t{=}s, A_t{=}a]$,
$\gamma \in [0,1]$ is the reward discount parameter,
and $\dzero:\mathcal S \to [0,1]$ characterizes the initial state distribution according to $\dzero(s):=\Pr(S_0{=}s)$.\footnote{For simplicity, our notation assumes that states, actions, and rewards are discrete random variables allowing for discussion of probabilities, rather than densities or measure theoretic probability. However, the methods proposed in this work extend to MDPs with continuous states, actions, and rewards. Furthermore, although we focus on OPE for MDPs, the method that we propose also applies to \emph{partially observable} MDPs (POMDPs).}
A policy $\pi:\mathcal S \times \mathcal A \to [0,1]$ characterizes how actions can be selected given the current state according to $\pi(s,a):=\Pr(A_t{=}a|S_t{=}s)$.
We consider finite horizon MDPs \cite{sutton2018reinforcement} where episodes terminate by some (unspecified) time $T \in \mathbb{N}$---which is common in practical applications of OPE. For simplicity, we set $\gamma=1$, allowing us to omit $\gamma$ terms.

For OPE, a dataset $\Dn$ is collected by deploying a behavior policy $\pi_b$ on the MDP $M$.
The dataset of $n$ logged trajectories is denoted by $\Dn := \{ H^{i} \}_{i=1}^n$ where each $H^{i} := (S_{0}^{i}, A_{0}^{i}, R_{0}^{i}, S_{1}^{i}, \dots)$ represents an independent trajectory generated by executing $\pi_b$.
The performance of an evaluation policy $\pi_e$ is its expected return, denoted by\footnote{We write $;\pi$ within statements of probability or expectations to indicate that random variables like $S_t, A_t$, and $R_t$ result from the use of policy $\pi$.}
\begin{equation}
    J(\pi_e) := \E\left[\sum_{t=1}^T R_t ; \pi_e\right].
\end{equation}
The problem of off-policy evaluation entails estimating $J(\pi_e)$ with access only to data $\Dn$, generated by a behavior policy $\pi_b$, without additional interaction with the MDP.
To ensure that samples in $\Dn$ are sufficiently informative, we make the common assumption that any outcome under $\pi_e$ has non-negligible probability of occurring under $\pi_b$.
\begin{assumption}%
    \label{ass:support}
    There exists an (unknown) $\varepsilon > 0$ such that for all $s \in \mathcal{S}$ and $a \in \mathcal{A}$,
    $\left(\pi_b(s,a)<\varepsilon\right) \Longrightarrow$ $(\pi_e(s,a)=0)$.
\end{assumption}

\paragraph{Background:}
For a detailed review of OPE methods, we refer the reader to surveys by \citet{voloshin2019empirical} and \citet{uehara2022review}.
Concepts fundamental to this approach are briefly introduced here.
\begin{enumerate}[itemsep=0pt, topsep=0pt, leftmargin=*]
    \item
          \textbf{Importance Sampling:}
          Importance sampling \citep{kahn1951estimation} enables unbiased estimation of the expected value, $\E[f(X)]$, of a function $f$ applied to a random variable $X \sim p$, given samples of a different random variable $Y \sim q$. %
          The importance sampling estimator is $\textcolor{purple}{(p(Y)/q(Y))}f(Y)$, where $\textcolor{purple}{p(Y)/q(Y)}$ is a term called an
              {\color{purple}importance weight}.
          This technique can provide unbiased estimates (i.e., $\E\left[\textcolor{purple}{(p(Y)/q(Y))}f(Y) \right] = \E\left[f(X)\right]$) and has proven effective for variance reduction in Monte Carlo sampling \citep{rubinstein2016simulation} and for model-free OPE in RL \citep{precup2000eligibility}.
    \item
          \textbf{State Abstraction:}
          State abstraction aims to reduce the size of the state space by grouping together similar states in a way that does not change the essence of the underlying problem \citep{li2006towards,ravindran2004algebraic,abel2019theory}.
          A state abstraction function $\phi: \mathcal{S} \to \mathcal{Z}$ lies in the set of functions
          $\phi \in \Phi$
          that map each state $s \in \mathcal{S}$ to an abstract state $z \in \mathcal{Z}$.
          We consider abstraction functions that partition $\mathcal{S}$ into disjoint sets, where $\mathcal{Z}$ is a finite set.
\end{enumerate}

\commentblock{%
    \paragraph{Prior Work:}
    Exisiting methods for OPE can be broadly categorized into: IS-based, model-based and mixture methods.
    Model-based methods commonly rely on parametric models to estimate the value functions of the evaluation policy \citep{le2019batch,farajtabar2018more}, the transition and reward dynamics \citep{zhang2021autoregressive,paduraru2013off}, or the state-action distribution \citep{liu2018breaking,zhang2020gendice,yang2020off}.
    These methods rely on a model class assumption to model processes or distributions from data, and frequently encounter \textit{model class mismatch} when the assumed model class fails to represent the data generating distribution.
    This issue commonly leads model-based methods to incur asymptotic bias \citep{marivate2015improved,farahmand2011model}.
    IS-based methods use importance sampling in a model-free manner to estimate the expected off-policy return \citep{precup2000eligibility}.
    While they enjoy asymptotic correctness, they suffer from high variance, particularly in long horizon settings \citep{liu2018breaking}.
    Mixture methods combine the predictions obtained from model-based and IS-based methods \citep{jiang2016doubly,thomas2016data}, trading off between the bias and variance of the two. However, in some cases, combining the predictions also combines the drawbacks.
    For a detailed review of OPE methods, we refer the read to relevant surveys by \citet{voloshin2019empirical} and \citet{uehara2022review}.
    Additional related work is covered in Section \ref{sec:related_work}.
}

\paragraph{Notation:}
Indicator functions are abbreviated for clarity.
For example, $\Iti{z, z'} := \I{\phi(S^{(i)}_{t+1}) = z', \phi(S^{(i)}_t) = z}$ denotes the occurrence of abstract states $z$ and $z'$ at time steps $t$ and $t+1$ in the $i^\text{th}$ logged trajectory, with $\Iti{z}$ defined correspondingly.
The expected return of $\pi$ when obtained from $O$---where $O$ may be the MDP, or an ARP---is denoted by $J(\pi; O)$.
The sample estimate of a variable $y$ estimated from $n$ samples is denoted by $\hat{y}_n$.
Summation limits are often dropped for brevity, with $\sumt$ denoting $\sum_{t=0}^T$ and $\sum_i$ denoting $\sum_{i=1}^n$.

\subsection{Markov Reward Processes}

A \emph{Markov reward process} (MRP) extends the idea of a Markov chain by associating states with rewards.
Formally, an MRP is a tuple $(\mathcal{X}, p, r, \gamma, \eta)$ where $\mathcal{X}$ is the set of states of the MRP, $X_t$ is the state at time $t$, $p: \mathcal{X} \times \mathcal{X} \to [0,1]$ is the transition function where $p(x,x') := \Pr(X_{t+1} {=} x' | X_t {=} x)$, $r: \mathcal{X} \to \mathbb{R}$ is the reward function where $r(x) := \E[R_t | X_t {=} x]$, $\gamma \in [0, 1]$ is the discount factor, and $\eta: \mathcal{X}\to [0,1]$ is the starting state distribution.
We consider finite horizon MRPs where episodes terminate by some (unspecified) timestep and set $\gamma=1$.

A specific MRP is induced by the use of a fixed {\color{purple}policy $\pi$} on an {\color{blue!80!black}MDP $M$}, where $\mathcal{X} = \mathcal{S}$.
The resulting transition and reward functions, denoted by $p^\pi$ and $r^\pi$ respectively, are:
\begin{equation}
    p^\pi(x,x') = \frac{\sumt \Pr({\color{blue!80!black}S_{t+1}} = x', {\color{blue!80!black}S_t} = x; {\color{purple}\pi})}{\sumt \Pr({\color{blue!80!black}S_t} = x; {\color{purple}\pi})},
    \quad
    r^\pi(x) = \frac{\sumt \E\left[R_t |{\color{blue!80!black} S_t}=x; {\color{purple}\pi}\right]\Pr({\color{blue!80!black}S_t} = x; {\color{purple}\pi})}{\sumt \Pr({\color{blue!80!black}S_t} = x; {\color{purple}\pi})}.
    \label{eq:mrp}
\end{equation}
The Markov property \citep{markov1954theory} allows for further simplification of the above expressions (detailed in Appendix \ref{app:notes_mrp_arp}),
but this form is most conducive to our subsequent discussion.
In this work, we focus on a specific instantiation of an MRP, described in the next section, where the set of states $\mathcal{X}$ of the MRP are outputs of a state abstraction function $\phi \in \Phi$. %

\section{Abstract Reward Processes}

\newcommand{\dblue}{blue!80!black}

An abstract reward process is a Markov reward process---derived from {\color{\dblue}MDP $M$} and {\color{purple}policy $\pi$} and defined over \emph{abstract} states---that we use to evaluate $\pi$.
The ARP provides two primary benefits for policy evaluation: (1) it preserves sufficient information to exactly evaluate the policy $\pi$, and (2) the ARP can be \textit{consistently} estimated from data.
In this section, we formalize the concept of an ARP, and highlight the theoretical and practical benefits of using ARPs for policy evaluation.

Given a state abstraction function $\phi: \mathcal{S} \to \mathcal{Z}$, the ARP $\RP^\pi$ is
defined such that $\mathcal{X} = \mathcal{Z}$.
Formally, $\RP^\pi$ is an MRP
$(\mathcal{Z}, \pRP^\pi, \rRP^\pi, \dzeroRP)$, with $\gamma=1$ (see Appendix \ref{app:notes_mrp_arp} for a discussion on termination in ARPs and MRPs).
The components of the ARP are defined over \textit{abstract} states as:
\begin{equation}
    { \pRP^\pi(z, z') {:=} \frac{\sum_t \Pr(\phi({\color{\dblue}S_{t+1}}){=}z', \phi({\color{\dblue}S_t}) {=} z;  {\color{purple}\pi})}{\sum_t \Pr(\phi({\color{\dblue}S_t}){=}z; {\color{purple}\pi})},
    \rRP^\pi(z)     {:=} \frac{\sum_t \E[R_t {\mid} \phi({\color{\dblue}S_t}) {=} z; {\color{purple}\pi}] \Pr(\phi({\color{\dblue}S_t}) {=} z; {\color{purple}\pi})}{\sum_t \Pr(\phi({\color{\dblue}S_t}) {=} z; {\color{purple}\pi})},}
    \label{eq:arp}
\end{equation}

and $\dzeroRP(z):= \Pr(\phi(S_0) = z)$.
These expressions \textit{cannot} be simplified further, unlike the case of an MRP \cite{allen2021learning}.
Since $\mathcal{Z}$ is a finite set, i.e., the abstract states are discrete, the components of the ARP can be represented by matrices (we use uppercase letters to emphasize this).
The expected return of $\RP^\pi$ can be computed efficiently using a linear solver to evaluate the expression $J(\pi; \RP^\pi) := (\mathrm{I} - \pRP^\pi)^{-1} \rRP^\pi \dzeroRP$, or via Monte Carlo rollouts of the reward process.

\textbf{ARPs are \emph{Performance Preserving}:}
The expected return of an ARP has a surprising property: even though some state information is abstracted away to create simple discrete abstract states, the finite ARP,
derived from a possibly continuous and complex MDP,
preserves sufficient information about the performance of the policy that defines the ARP \textit{for all} $\phi \in \Phi$.
\begin{restatable}{theorem}{ARPpe}
    $\forall ~\phi \in \Phi$,
    the performance of a policy $\pi$ is equal to the expected return of the abstract reward process $\RP^\pi$ defined from MDP $M$, i.e.,
    $J(\pi; \RP^\pi) = J(\pi; M)$. %
    \label{prop:ARP_pe}
\end{restatable}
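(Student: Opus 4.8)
The plan is to prove the identity through occupancy (state-visitation) measures, reducing both returns to a common inner product between the ARP reward vector $\rRP^\pi$ and a single time-aggregated occupancy over abstract states. To this end I would define the unnormalized abstract occupancy induced by $\pi$ on $M$ as $\mu(z) := \sumt \Pr(\phi(S_t) = z; \pi)$, which is exactly the common denominator appearing in both components of the ARP in Eq.~\eqref{eq:arp}. Clearing those denominators then gives the two workhorse identities $\mu(z)\,\pRP^\pi(z,z') = \sumt \Pr(\phi(S_{t+1}) = z', \phi(S_t) = z; \pi)$ and $\mu(z)\,\rRP^\pi(z) = \sumt \E[R_t \mid \phi(S_t) = z; \pi]\,\Pr(\phi(S_t) = z; \pi)$.

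First I would rewrite the MDP return by conditioning each per-step reward on its abstract state. Since the abstraction partitions $\mathcal{S}$, the law of total expectation gives $J(\pi; M) = \sumt \E[R_t; \pi] = \sumt \sum_{z} \E[R_t \mid \phi(S_t) = z; \pi]\,\Pr(\phi(S_t) = z; \pi) = \sum_z \mu(z)\,\rRP^\pi(z)$, where the final equality applies the reward identity above. Thus $J(\pi; M)$ is the inner product of $\rRP^\pi$ with the aggregated MDP occupancy $\mu$.

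Second, and this is the crux, I would show that $\mu$ is precisely the visitation measure of the abstract reward process $\RP^\pi$, i.e.\ that $\mu$ solves the ARP flow-balance equation $\mu(z') = \dzeroRP(z') + \sum_z \mu(z)\,\pRP^\pi(z,z')$. Using the transition identity, the summation on the right equals $\dzeroRP(z') + \sum_z \sumt \Pr(\phi(S_{t+1}) = z', \phi(S_t) = z; \pi)$; marginalizing out $z$ collapses this to $\Pr(\phi(S_0) = z') + \sumt \Pr(\phi(S_{t+1}) = z'; \pi) = \sumt \Pr(\phi(S_t) = z'; \pi) = \mu(z')$. Because the ARP's own visitation measure $\nu$ satisfies the identical linear system $\nu = \dzeroRP + \nu\,\pRP^\pi$ by construction of the MRP, and this system has the unique solution $\nu = \dzeroRP(\mathrm{I} - \pRP^\pi)^{-1}$ (invertibility of $\mathrm{I} - \pRP^\pi$ being guaranteed by the finite-horizon/absorbing structure that makes $\pRP^\pi$ strictly substochastic), I conclude $\nu = \mu$. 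Evaluating the ARP return via the standard occupancy formula then yields $J(\pi; \RP^\pi) = \sum_z \nu(z)\,\rRP^\pi(z) = \sum_z \mu(z)\,\rRP^\pi(z) = J(\pi; M)$.

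The main obstacle is conceptual rather than computational: the abstracted process $\{\phi(S_t)\}_t$ is in general \emph{not} Markov, so the single time-homogeneous matrix $\pRP^\pi$ does not reproduce the true per-timestep conditionals $\Pr(\phi(S_{t+1}) \mid \phi(S_t))$, which genuinely vary with $t$. The reason the theorem nevertheless holds is that the return is a linear functional of the dynamics depending on them only through the aggregated occupancy, and the particular aggregation defining $\pRP^\pi$ and $\dzeroRP$ is exactly the one under which that aggregated occupancy is a fixed point of the flow equation. The delicate points to get right are therefore the marginalization step in which the per-step time dependence cancels, the justification that $\mathrm{I} - \pRP^\pi$ is invertible so the occupancy is uniquely pinned down, and reconciling the summation ranges over $t$ for the reward and occupancy terms with the exact index convention in the definition of $J$.
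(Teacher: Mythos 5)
Your proposal is correct in substance, and at the crux it takes a genuinely different route from the paper's. Both arguments reduce the claim to the identity $J(\pi;M)=\sum_{z}\mu(z)\,\rRP^\pi(z)$ with $\mu(z):=\sumt\Pr(\phi(S_t)=z;\pi)$; your first step (law of total expectation plus clearing the denominator of $\rRP^\pi$) is essentially what the paper's algebra establishes, run in the other direction, via the undiscounted state distribution $\psi^\pi(s)\propto\sumt\Pr(S_t=s;\pi)$ and the cancellation of $\sum_s\psi^\pi(s)\I{\phi(s)=z}$. Where the two proofs part ways is in connecting this quantity to $J(\pi;\RP^\pi)$: the paper writes $J(\pi;\RP^\pi)=\sumt\sum_z\Pr(Z_t=z;\pi)\,\rRP^\pi(z)$ and then substitutes $\Pr(Z_t=z;\pi)=\sum_s\Pr(S_t=s;\pi)\I{\phi(s)=z}$ \emph{per timestep}, i.e., it silently identifies the ARP chain's time-$t$ marginal with the MDP's abstracted time-$t$ marginal. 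As you correctly observe, that identification is false in general --- the abstracted process is neither Markov nor time-homogeneous --- and only the time-aggregated occupancies agree. Your flow-balance argument ($\mu$ solves $\mu=\dzeroRP+\mu\,\pRP^\pi$, the ARP's own occupancy $\nu$ solves the same linear system, and uniqueness forces $\nu=\mu$) is precisely the missing link that makes this step rigorous. What the paper's route buys is brevity and no appeal to invertibility; what yours buys is a sound treatment of exactly the subtlety the theorem turns on.

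One point in your write-up needs tightening: the assertion that the absorbing structure makes $\pRP^\pi$ ``strictly substochastic'' is not right as stated. Restricted to non-absorbing abstract states, rows of $\pRP^\pi$ can sum to exactly $1$ (an abstract state may send all of its mass to other transient abstract states), so invertibility of $\mathrm{I}-\pRP^\pi$ on that block is not automatic from row sums; it requires that no closed class of non-absorbing abstract states exists. This does follow from your own machinery: if $C$ were such a closed class containing states with $\mu(z)>0$, summing your balance equation over $z'\in C$ shows $C$ receives neither initial mass nor inflow, contradicting the fact that the MDP reaches $C$ (consider the earliest $t$ with $\Pr(\phi(S_t)\in C;\pi)>0$). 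Worth noting that the paper does no better on this point: its appendix justifies invertibility of the corresponding matrix by appealing to Theorem \ref{prop:ARP_pe} itself, which would be circular if invertibility were an ingredient of the theorem's proof --- so closing this lemma cleanly is a genuine improvement, not pedantry.
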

\begin{proof}
    \vspace{\proofspacing}
    See Appendix \ref{app:thm31}.
    \vspace{\proofspacingpost}
\end{proof}
The result holds for the ground-truth ARP $\RP^\pi$.
In practice, a model of the ARP must be estimated from data.
Next, we describe how the choice of defining an ARP over discrete abstract states eliminates model class mismatch, enabling asymptotically correct estimation of the ARP from data.

\textbf{Eliminating Model Class Mismatch:}
Methods that learn models from data make an assumption about the class of models used to represent the data.
A significant challenge is that of \textit{model class mismatch}, where this assumed model class is often unable to represent the true data distribution.
As an example, a neural network parameterizing a univariate Gaussian distribution cannot accurately represent data generated from a bimodal distribution.
In the context of this work, the transition function of an ARP may specify arbitrary probability distributions over discrete abstract states, necessitating a careful selection of the model class.
\textit{Tabular models} are capable of representing \emph{any} distribution over discrete variables.
Therefore, using tabular models when estimating an ARP from data ensures that there is no model class mismatch.
This is why %
we employ state abstraction functions $\phi \in \Phi$ that partition the state space into a finite number of disjoint sets, or discrete abstract states.
The abstraction functions may be viewed as: (a) a discrete clustering of the state space, or (b) a discretization of continuous states.

While this addresses model class mismatch, the use of a discrete state abstraction may itself be a source of modeling error.
Mapping groups of (possibly continuous) states to discrete abstract states loses information about the state of the MDP.
A process defined over the abstract states cannot in general capture the full complexity of the underlying MDP and policy.
Nonetheless, Theorem \ref{prop:ARP_pe} guarantees that the ARP is performance-preserving, ensuring that the use of discrete state abstractions is not a source of error for policy evaluation.
Additionally, since we have eliminated model class mismatch, a perfect model of the ARP can be asymptotically estimated.

To estimate the ARP from $\Dn$,
apply the state abstraction function to states in $\mathcal{D}_n^{(\pi)}$ to map them to the abstract state space.
Denote the \textit{maximum likelihood estimate} of the model of the ARP obtained from the dataset (with abstract states) by $\hatRP^\pi {:=} (\mathcal{Z}, \hatpRP^\pi, \hatrRP^\pi, \hatdzeroRP)$.
The components take the form:
\begin{equation}
    \hatpRP^\pi(z, z')  = \frac{\sumnt \Iti{z, z'}}{\sumnt \Iti{z}}; \quad
    \hatrRP^\pi(z)      = \frac{\sumnt \Iti{z} R^i_t}{\sumnt \Iti{z}}; \quad
    \hatdzeroRP^\pi(z) = \frac{\sumi \Ii{z_0 = z}}{n}
\end{equation}

\textbf{Asymptotic Correctness:}
With access to \textit{on-policy data} $\mathcal{D}_n^{(\pi)}$, the following result states that ARPs enable consistent model-based estimation of the policy's performance.

\begin{restatable}{lemma}{onpolicy}
    \label{thm:onpolicy}
    $\forall \phi \in \Phi$,
    the expected return of
    the maximum likelihood estimate $\hatRP^\pi$
    converges almost surely to
    the expected return of the policy $\pi$, i.e.,
    $J(\pi; \hatRP^\pi) \asto J(\pi; M).$
\end{restatable}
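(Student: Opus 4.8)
The plan is to prove the claim in two largely independent pieces and then glue them together with the exact identity supplied by Theorem~\ref{prop:ARP_pe}. First I would show that every component of the maximum likelihood estimate $\hatRP^\pi$ converges almost surely to the corresponding component of the ground-truth ARP $\RP^\pi$. Then I would argue that $J(\pi;\cdot)$ is a continuous function of those components, so that the continuous mapping theorem upgrades the componentwise convergence to $J(\pi;\hatRP^\pi) \asto J(\pi;\RP^\pi)$. Finally, Theorem~\ref{prop:ARP_pe} gives $J(\pi;\RP^\pi) = J(\pi;M)$ exactly, collapsing the chain of convergences onto the desired target.

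For the component convergence, the key observation is that although the within-trajectory summands $\Iti{z,z'}$ and $\Iti{z}R^i_t$ are temporally dependent, the trajectories $H^i$ are i.i.d.\ under $\pi$. I would therefore treat each per-trajectory aggregate, such as $\sumt \Iti{z,z'}$, as a single random variable with finite expectation---the finite horizon $T$ truncates the time-sum to finitely many terms, each an indicator (hence bounded) or an indicator-weighted reward (integrable whenever $R_t$ has finite mean)---and apply the strong law of large numbers across the $n$ i.i.d.\ trajectories. This yields, for the scaled numerator of $\hatpRP^\pi(z,z')$,
\begin{equation}
\frac{1}{n}\sumnt \Iti{z,z'} \asto \sumt \Pr(\phi(S_{t+1}){=}z',\, \phi(S_t){=}z; \pi),
\end{equation}
and analogously the scaled visitation counts $\frac{1}{n}\sumnt \Iti{z}$, the reward sums $\frac{1}{n}\sumnt \Iti{z}R^i_t$, and the empirical initial distribution $\hatdzeroRP^\pi(z)$ converge almost surely to their population counterparts appearing in Eq.~\eqref{eq:arp} and in $\dzeroRP(z) = \Pr(\phi(S_0){=}z)$. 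Because $\mathcal{Z}$ is finite there are only finitely many such statements, so they hold simultaneously on an event of probability one. Wherever the limiting denominator $\sumt \Pr(\phi(S_t){=}z;\pi)$ is strictly positive, the continuous mapping theorem applied to division then gives $\hatpRP^\pi(z,z') \asto \pRP^\pi(z,z')$ and $\hatrRP^\pi(z) \asto \rRP^\pi(z)$ exactly.

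The main obstacle is twofold and concerns the boundary cases of this argument. First are the abstract states $z$ that are unreachable under $\pi$, for which the limiting denominator is zero and the MLE ratio is an indeterminate $0/0$ form; such states never appear in any trajectory and hence never contribute to any return, so I would fix their outgoing transitions and rewards by an arbitrary convention and argue they are immaterial to $J$. The second and more delicate part is establishing genuine continuity of $J(\pi;\cdot)$ at the limit point: this requires $(\mathrm{I} - \pRP^\pi)$ to be invertible so that the closed form $J(\pi;\RP^\pi) = (\mathrm{I} - \pRP^\pi)^{-1}\rRP^\pi \dzeroRP$ is well defined, which I would derive from the finite-horizon termination assumption (every episode ends by time $T$, making the expected visitation matrix finite), and that $(\mathrm{I} - \hatpRP^\pi)$ is likewise invertible for all large $n$ almost surely. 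Since $\mathcal{Z}$ is finite, matrix inversion, multiplication, and the surrounding arithmetic are continuous wherever the inverse exists, so the continuous mapping theorem applies to the composite map and delivers $J(\pi;\hatRP^\pi) \asto J(\pi;\RP^\pi)$. Combining this with Theorem~\ref{prop:ARP_pe} completes the proof.
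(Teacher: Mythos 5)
Your proposal is correct and follows the same skeleton as the paper's proof: componentwise almost-sure convergence of $(\hatpRP^\pi,\hatrRP^\pi,\hatdzeroRP)$ to the components of $\RP^\pi$ (the paper's Properties \ref{prop:pConvergence}, \ref{prop:rConvergence}, and \ref{prop:etaConvergence}), then continuity of the return functional plus the continuous mapping theorem, then Theorem \ref{prop:ARP_pe} to identify the limit $J(\pi;\RP^\pi)$ with $J(\pi;M)$. Where you genuinely differ is the law-of-large-numbers step. The paper fixes each timestep $t$, applies Kolmogorov's strong law to $\tfrac{1}{n}\sum_{i}\Iti{z,z'}$ for that $t$, and then needs countable additivity of the probability measure plus the dominated convergence theorem to move $\lim_{n\to\infty}$ past $\sumt$. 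You instead treat the entire per-trajectory aggregate $\sumt\Iti{z,z'}$ as a single bounded random variable (bounded because the horizon is finite), i.i.d.\ across trajectories, and invoke the strong law once; linearity of expectation over the finite time sum then yields the same limit. Your route is more elementary and sidesteps the interchange-of-limits machinery entirely; the paper's heavier argument is what one would need if the time sum were not uniformly truncated (e.g., the absorbing-state formulation in which sums range over all $t\in\mathbb{N}$). Your handling of unreachable abstract states (fix them by a convention, argue they are immaterial to the return) matches the paper's hardcoding of self-loops, zero rewards, and zero initial mass.

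One caution on the invertibility point you flag: under the paper's (and effectively your) convention that unobserved abstract states self-loop with probability one, the matrix $\mathrm{I}-\hatpRP^\pi$ is singular for \emph{every} $n$ whenever an unreachable abstract state exists, so your claim that it "is invertible for all large $n$ almost surely" fails for the full matrix as stated. The paper avoids this by evaluating the return in the visitation-sum form $\sum_{z}\bigl(\sumt\hat{\Pr}(Z_t{=}z;\pi)\bigr)\hatrRP^\pi(z)$ and splitting off the never-visited states $\bar{Z}$, whose contribution is identically zero; equivalently, you can invert only the block indexed by observed (transient) states, or use the termination-function form of Appendix \ref{app:notes_mrp_arp}. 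This is a repairable technicality in your write-up---your convention is free to be chosen compatibly---but the continuity argument must be applied to the restricted map, not to inversion of the full $|\mathcal{Z}|\times|\mathcal{Z}|$ matrix.
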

\begin{proof}
    \vspace{\proofspacing}
    See Appendix \ref{app:lemm32}.
    \vspace{\proofspacingpost}
\end{proof}
As the amount of data ($n$) increases, the estimate of $J(\pi; M)$ becomes increasingly accurate, i.e., the return estimate is consistent.
This result holds for all $\phi \in \Phi$.
It implies that \emph{even an arbitrarily small model derived from a large, complex sequential decision-making problem will not introduce asymptotic bias.}
However, this theoretical guarantee requires on-policy data and so does not directly assist us in off-policy evaluation.
To that end, we introduce a procedure for estimation of the ARP from \textit{off-policy data} that merges the machinery of IS-based and model-based methods.

\subsection{Estimation from Off-Policy Data: Weighted Maximum Likelihood Estimation}

We present a method for consistent estimation of the ARP corresponding to the evaluation policy $\pi_e$ from off-policy data $\Dn$.
It relies on the following intuition:
\begin{tcolorbox}[colframe=black!50!white, colback=white, boxrule=0.5pt, left=3pt, right=3pt, top=2pt, bottom=2pt]
    The expected value of the indicator function of an event represents the probability of that event.
    Use importance sampling to approximate the probability of that event under a different distribution.
\end{tcolorbox}

To estimate an ARP from off-policy data, assign importance weights $\rho_{0:t}$ to the abstract states $(Z_t := \phi(S_t))$ in the dataset $\Dn$.
Let $H_t\coloneqq (S_0,A_0,R_0,\dotsc,S_{t-1},A_{t-1},R_{t-1}, S_t, A_t)$ denote a sub-trajectory up to time $t$.
The importance weight ${\color{purple}\rho_{0:t}}$ is then the ratio of the probability of $H_t$ under $\pi_e$ and $\pi_b$, i.e., %
$
    {\color{purple}\rho_{0:t}} :=  \frac{\Pr(H_t; \pi_e)}{\Pr(H_t; \pi_b)} = \prod_{j=0}^t \frac{ \pi_e(S_j, A_j)}{\pi_b(S_j, A_j)}.
$\footnote{The expression for the importance weight can be extended to continuous and hybrid probability measures using Radon-Nikodym derivatives. As with other terms, we hereafter denote the importance weight for the $i^\text{th}$ episode as $\rho_{0:t}^i$.}
The maximum likelihood estimate (MLE) of $\RP^{\pi_e}$ obtained from the weighted off-policy data is denoted by
$\offphatRP := \left(
    \mathcal{Z},
    \offphatpRP,
    \offphatrRP,
    \offphatdzeroRP
    \right)$, where $\offphatdzeroRP(z) = \frac{\sumi \Ii{z_0 = z}}{n}$ remains unchanged, and
\begin{equation}
    \begin{split}
        \offphatpRP(z, z')  = \frac{\sumnt \Iti{z, z'} {\color{purple}\rho_{0:t}} }{\sumnt \Iti{z}  {\color{purple}\rho_{0:t}}}, \qquad
        \offphatrRP(z)      = \frac{\sumnt \Iti{z} {\color{purple}\rho_{0:t}} R^i_t}{\sumnt \Iti{z} {\color{purple}\rho_{0:t}}}.
    \end{split}
    \label{eq:offpARP}
\end{equation}
This estimation is a form of weighted maximum likelihood estimation \citep{field1994robust}.
Including the importance ratios in the numerator and denominator of the estimated transition and reward functions of the ARP enables estimation from off-policy data generated by $\pi_b$.
The estimated model of the ARP is consistent and, as shown next, allows for consistent off-policy evaluation.
\begin{restatable}{lemma}{offpARP}
    \label{thm:offpARP}
    Under Assumption \ref{ass:support},
    the weighted maximum likelihood estimate $\offphatRP$ %
    converges almost surely to the ground-truth ARP $\RP^{\pi_e}$, i.e.,
    $\offphatRP \asto \RP^{\pi_e}.$
\end{restatable}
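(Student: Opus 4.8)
The plan is to prove the claim entrywise. Since $\mathcal{Z}$ is finite, both $\offphatRP$ and $\RP^{\pi_e}$ consist of finitely many matrix/vector entries, so it suffices to show each entry of $\offphatpRP$, $\offphatrRP$, and $\offphatdzeroRP$ converges almost surely to the corresponding entry of $\pRP^{\pi_e}$, $\rRP^{\pi_e}$, and $\dzeroRP$; intersecting the finitely many almost-sure events then yields $\offphatRP \asto \RP^{\pi_e}$. Each entry is reduced to applying the strong law of large numbers (SLLN) to an empirical average, followed by an importance-sampling rewrite of the resulting limit—mirroring the on-policy argument of Lemma~\ref{thm:onpolicy} but now with the weights $\rho_{0:t}$ present.

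First I would treat the numerators and denominators of Equation~\eqref{eq:offpARP} separately. Writing $\sumnt = \sumi \sumt$ and dividing by $n$, each sum becomes an empirical mean $\tfrac{1}{n}\sumi Y^i$ of per-trajectory quantities—for instance $Y^i = \sumt \Iti{z, z'}\rho_{0:t}^i$ for the transition numerator—and the $Y^i$ are i.i.d.\ because the trajectories $H^i$ are i.i.d.\ under $\pi_b$. Assumption~\ref{ass:support} guarantees that wherever $\pi_e(s,a) > 0$ we have $\pi_b(s,a) \ge \varepsilon$, so every per-step ratio is at most $1/\varepsilon$ and $\rho_{0:t}^i$ is uniformly bounded over the finite horizon $T$; combined with integrable rewards this makes each $Y^i$ integrable, and the SLLN gives $\tfrac{1}{n}\sumi Y^i \asto \E[Y^1; \pi_b]$.

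The crux of the argument is identifying these limits with the ground-truth ARP components via an importance-sampling identity: for any integrable function $f$ of the sub-trajectory through $S_{t+1}$, the ratio $\rho_{0:t}$ equals the ratio of trajectory probabilities under $\pi_e$ and $\pi_b$—the policy-independent transition and reward factors cancel—so $\E[\rho_{0:t}\, f; \pi_b] = \E[f; \pi_e]$. Applying this with $f = \I{\phi(S_{t+1}) = z', \phi(S_t) = z}$, $f = \I{\phi(S_t) = z}$, and $f = \I{\phi(S_t) = z}R_t$, and using linearity to pull the sum over $t$ out of the expectation, the numerator and denominator limits become exactly the quantities $\sumt \Pr(\phi(S_{t+1}) = z', \phi(S_t) = z; \pi_e)$, $\sumt \Pr(\phi(S_t) = z; \pi_e)$, and $\sumt \E[R_t \mid \phi(S_t) = z; \pi_e]\Pr(\phi(S_t) = z; \pi_e)$ appearing in Equation~\eqref{eq:arp}. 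For the initial distribution no reweighting is needed, since $S_0 \sim \dzero$ is policy-independent; the SLLN gives $\offphatdzeroRP(z) \asto \Pr(\phi(S_0) = z) = \dzeroRP(z)$ directly.

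Finally I would form the ratios and invoke the continuous mapping theorem for almost-sure convergence, as $(a,b) \mapsto a/b$ is continuous wherever $b \neq 0$, yielding $\offphatpRP(z,z') \asto \pRP^{\pi_e}(z,z')$ and $\offphatrRP(z) \asto \rRP^{\pi_e}(z)$. The step I expect to require the most care is the denominator: the ratio limit is valid only when $\sumt \Pr(\phi(S_t) = z; \pi_e) > 0$, which holds precisely for abstract states reachable under $\pi_e$—the only states that contribute to $J(\pi_e; \RP^{\pi_e})$—so entries for unreachable $z$ may be defined arbitrarily without affecting the ARP's return. I would also verify that the boundedness of the weights from Assumption~\ref{ass:support} is exactly what makes the identity $\E[\rho_{0:t}\, f; \pi_b] = \E[f; \pi_e]$ well-posed, ruling out the degenerate case where $\pi_e$ places mass where $\pi_b$ does not.
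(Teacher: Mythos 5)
Your proposal is correct and takes essentially the same approach as the paper's proof: entrywise analysis of the numerators and denominators in Equation~\eqref{eq:offpARP}, the strong law of large numbers, the importance-sampling change of measure $\E[\rho_{0:t}\,f;\pi_b]=\E[f;\pi_e]$, and the continuous mapping theorem applied to the ratios, with degenerate (unreached) abstract states handled by convention. The only minor difference is that you apply the SLLN once to the i.i.d.\ per-trajectory sums $\sumt \Iti{z,z'}\rho_{0:t}^i$, which sidesteps the paper's per-timestep SLLN followed by an interchange of $\lim_{n\to\infty}$ and $\sumt$ (done there via countable additivity and dominated convergence).
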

\begin{proof}
    \vspace{\proofspacing}
    See Appendix \ref{app:lemm33}.
    \vspace{\proofspacingpost}
\end{proof}

\section{Off-Policy Evaluation with ARPs}

The expected return of $\offphatRP$ is a consistent estimate of the performance of policy $\pi_e$ since $\offphatRP$ is an asymptotically correct estimate of $\RP^{\pi_e}$, as per Lemma \ref{thm:offpARP}.
\begin{restatable}{theorem}{ope}
    \label{thm:ope_ARP}
    The expected return of the ARP $\offphatRP$ (built from off-policy data) converges almost surely to the expected return of $\pi_e$, i.e.,
    $J(\pi_e; \offphatRP) \asto J(\pi_e; M).$
\end{restatable}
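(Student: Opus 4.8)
The plan is to combine the three preceding results into a short continuous-mapping argument, so the proof is essentially immediate given the machinery already established. By Lemma~\ref{thm:offpARP}, under Assumption~\ref{ass:support} the weighted maximum likelihood estimate converges to the ground-truth ARP, $\offphatRP \asto \RP^{\pi_e}$. Since $\mathcal{Z}$ is finite, this is almost sure convergence of each of the finitely many entries of $\offphatpRP$, $\offphatrRP$, and $\offphatdzeroRP$ to the corresponding entries of $\pRP^{\pi_e}$, $\rRP^{\pi_e}$, and $\dzeroRP$, i.e., convergence of a point in a finite-dimensional Euclidean space. The target then reduces to showing that the return map $J(\pi_e; \cdot)$ transports this convergence, after which Theorem~\ref{prop:ARP_pe} identifies the limit.

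First I would treat the return of an ARP as a function of its parameters via the closed form $J(\pi_e; \RP^{\pi_e}) = (\mathrm{I} - \pRP^{\pi_e})^{-1}\rRP^{\pi_e}\,\dzeroRP$, which is a composition of a matrix inversion with polynomial (matrix-vector product) operations in the finitely many ARP parameters. Matrix inversion is continuous on the open set of invertible matrices, and the remaining operations are continuous everywhere, so $J(\pi_e; \cdot)$ is continuous at any ARP whose transition matrix satisfies $\det(\mathrm{I} - \pRP^{\pi_e}) \neq 0$. I would then invoke the continuous mapping theorem for almost sure convergence: composing the a.s.\ convergent sequence $\offphatRP$ with the map $J(\pi_e; \cdot)$, continuous at the limit $\RP^{\pi_e}$, yields $J(\pi_e; \offphatRP) \asto J(\pi_e; \RP^{\pi_e})$. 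Finally, Theorem~\ref{prop:ARP_pe} gives $J(\pi_e; \RP^{\pi_e}) = J(\pi_e; M)$, and chaining the two displays completes the argument.

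The main obstacle is verifying that $J(\pi_e; \cdot)$ is genuinely continuous at the limit point, namely that $\mathrm{I} - \pRP^{\pi_e}$ is invertible and that invertibility is inherited by $\mathrm{I} - \offphatpRP$ for all sufficiently large $n$. In the finite-horizon setting with termination the abstract process is transient: augmenting $\mathcal{Z}$ with an absorbing terminal state renders $\pRP^{\pi_e}$ strictly sub-stochastic on the transient abstract states, so its spectral radius there is strictly below $1$ and $\mathrm{I} - \pRP^{\pi_e}$ is invertible, giving a well-defined limit. Because the determinant is continuous and $\offphatpRP \asto \pRP^{\pi_e}$, the event that $\mathrm{I} - \offphatpRP$ is invertible holds for all large $n$ on a probability-one set, so $J(\pi_e; \offphatRP)$ is eventually well-defined and the continuous mapping theorem applies on that set. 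A secondary point to confirm is well-definedness of the estimator when an abstract state reachable under $\pi_e$ has not yet appeared in the data; Assumption~\ref{ass:support} together with Lemma~\ref{thm:offpARP} ensures the relevant denominators in Equation~\eqref{eq:offpARP} are eventually positive, so this degeneracy occurs only on a null set in the limit and does not affect the almost sure conclusion.
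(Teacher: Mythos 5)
Your proposal is correct and follows essentially the same route as the paper's proof: invoke Lemma~\ref{thm:offpARP} for almost sure convergence of the ARP components, apply the continuous mapping theorem to the return map $J(\pi_e;\cdot) = (\mathrm{I}-\cdot)^{-1}\cdot\,\cdot$, and identify the limit via Theorem~\ref{prop:ARP_pe}. Your additional verification of invertibility of $\mathrm{I} - \pRP^{\pi_e}$ (via sub-stochasticity on transient abstract states) and of eventual well-definedness for unobserved abstract states fills in details the paper defers to its termination discussion in Appendix~\ref{app:notes_mrp_arp} and to the hardcoding construction in the proof of Lemma~\ref{thm:onpolicy}, but does not change the argument.
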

\begin{proof}
    \vspace{\proofspacing}
    See Appendix \ref{app:thm41}.
    \vspace{\proofspacingpost}
\end{proof}
To our knowledge, this is the \textit{first} instance of a model-based OPE method that comes with the theoretical guarantee of consistent performance estimates, even for continuous problems and without model class assumptions.
So far, we have achieved one of the starting goals---that of \textit{consistency}.
However, the use of importance weights $\rho_{0:t}$ for weighted MLE is expected to introduce high variance.
Next, we discuss methods to mitigate the variance of IS. %

\subsection{Variance Reduction: Leveraging \textit{Markovness} of the State Abstraction}
\label{sec:low_variance_markov}

A common technique for mitigating the variance of IS-based methods for OPE is clipping the importance weights to the $c$ most recent ratios \citep{guo2017using,bembom2008data,ionides2008truncated}, i.e., $\rho_{\clipt:t} := \prod_{i=\clipt}^t \frac{ \pi_e(S_i, A_i)}{\pi_b(S_i, A_i)}$,
where $\clipt {:=} \max(t-c+1, 0)$.
This is often a bad approximation for classical IS-based methods, as it implies that only the $c$ most recent actions affect the reward distribution at any timestep, which rarely holds true in practice.
In \methodname, importance weights are incorporated into model estimation, resulting in a more reasonable implication of weight clipping.

By importance weighting the abstract-state occurrences
as described in Equation \eqref{eq:offpARP},
clipping importance weights, in this case, implies \textit{that the $c$ most recent abstract states are sufficient to determine the current abstract-state transition and reward distributions}.
This allows actions from the distant past to influence the current reward, as \emph{the effects of actions propagate through the abstract state transitions}, unlike in IS-based methods.
This condition, that a recent history of abstract states is sufficient to predict the current abstract state transition distribution, often approximately holds in practice as discussed in POMDP literature \cite{littman2001predictive,nguyen2021converting}.
While the approximation may introduce asymptotic bias in exchange for reduced variance, certain abstraction functions that satisfy specific conditions can incur no asymptotic bias.

\paragraph{Weight Clipping without Asymptotic Bias:}
Intuitively, the use of $c$-clipped importance weights, $\rho_{\clipt:t}$, updates the
estimated distribution of the previous $c$ abstract states---as if under the evaluation policy---while leaving the ones before unchanged.
$c$-clipping does not introduce asymptotic bias
when the previous $c$ abstract states form a sufficient statistic for predicting the current abstract state transition distribution.
This notion of conditional independence from history given the recent past is referred to as the \textit{Markovness} of the abstraction function $\phi$.
We posit that there exist abstraction functions that are $c$-th order Markov \citep{talvitie2010simple,efroni2022provable}.

\begin{definition}[$c$-th order Markov]
    The abstraction function $\phi$ is $c$-th order Markov
    if $\Pr(\phi(S_{t+1}) | \phi(S_t), \cdots, \phi(S_{\clipt}); {\pi}) {=} \Pr(\phi(S_{t+1}) | \phi(S_t), \cdots, \phi(S_0); \pi)$ for $\pi \in \{\pi_b, \pi_e\}$. %
    \label{def:c-decodable}
\end{definition}
Let $\offphatRPclip$ denote the ARP estimated using $c$-clipped importance weights, $\rho_{\clipt:t}$, in place of $\rho_{0:t}$ in Equation \eqref{eq:offpARP}.
\begin{restatable}{theorem}{cclipping}
    \label{thm:clipping}
    Given a $c$-th order Markov $\phi$,
    the expected return of the abstract reward process
    $\offphatRPclip$
    converges almost surely to the expected return of $\pi_e$, i.e.,
    $J(\pi_e; \offphatRPclip) \asto J(\pi_e; M).$
\end{restatable}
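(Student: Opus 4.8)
The plan is to reduce the statement to the clipped analogue of Lemma~\ref{thm:offpARP}: that the clipped weighted-MLE process converges almost surely to the \emph{same} ground-truth ARP, $\offphatRPclip \asto \RP^{\pi_e}$. Granting this, the conclusion follows exactly as in Theorem~\ref{thm:ope_ARP}: the return functional, given in closed form by $J(\pi_e;\cdot)=(\mathrm{I}-\pRP)^{-1}\rRP\dzeroRP$, is a continuous function of the finitely many entries of the ARP wherever $\mathrm{I}-\pRP$ is invertible, so almost-sure convergence of the estimated ARP transfers to its return, and Theorem~\ref{prop:ARP_pe} identifies $J(\pi_e;\RP^{\pi_e})$ with $J(\pi_e;M)$. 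Hence the entire content of the theorem is the identification of the limit under clipping.

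First I would show that the clipped estimator converges to a well-defined limiting process. Each of $\offphatpRPclip$, $\offphatrRPclip$, and $\offphatdzeroRPclip$ is a ratio of empirical averages over the $n$ i.i.d.\ trajectories, so by the strong law of large numbers every numerator and denominator converges almost surely to its per-trajectory expectation under $\pi_b$; Assumption~\ref{ass:support} guarantees the clipped weights are well-defined and bounded, and the limiting denominators are positive for every abstract state relevant to $J(\pi_e;\cdot)$, so the continuous-mapping theorem delivers almost-sure convergence of the ratios. I would then rewrite each limit expectation $\E[\,\cdot\,\rho_{\clipt:t};\pi_b]$ via the importance-sampling identity: since $\rho_{\clipt:t}$ is the action likelihood ratio over the window $[\clipt,t]$, weighting by it resamples exactly those actions from $\pi_e$ while leaving the pre-window segment under $\pi_b$. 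In the transition denominator the final factor $\pi_e(S_t,A_t)/\pi_b(S_t,A_t)$ integrates to one (the indicator $\I{Z_t=z}$ does not depend on $A_t$), whereas in the numerator and in the reward term it does not, correctly converting the window transition and reward to $\pi_e$. The limiting transition and reward are therefore time-averages of \emph{hybrid} window quantities generated by running $\pi_b$ up to time $\clipt$ and $\pi_e$ thereafter through the length-$c$ window.

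The crux, and the step I expect to be the main obstacle, is showing that this hybrid reweighting reproduces the true $\pi_e$ ARP, i.e., that entering the window with the $\pi_b$-induced state distribution rather than the $\pi_e$-induced one introduces no asymptotic bias. This is precisely where Definition~\ref{def:c-decodable} enters: $c$-th order Markovness asserts that the last $c$ abstract states screen off all earlier history when predicting the next abstract state (and, by the same token, the reward distribution). I would cast this as a sufficiency property---conditioned on the window $\phi(S_{\clipt}),\dots,\phi(S_t)$, the conditional law of the next abstract state and the expected reward are determined independently of how the window was entered---so that replacing the pre-window policy $\pi_b$ by $\pi_e$ leaves the within-window conditionals unchanged. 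The delicate point is that Definition~\ref{def:c-decodable} is phrased for trajectories generated by a single policy, while the hybrid trajectory mixes $\pi_b$ and $\pi_e$; I would bridge this by invoking the property for $\pi_e$ to argue that the conditional distribution of $S_t$ given the abstract window is insensitive to the pre-window ground-state distribution, and for $\pi_b$ to handle the visitation of windows, so that the time-averaged ratios collapse exactly to $\pRP^{\pi_e}(z,z')$ and $\rRP^{\pi_e}(z)$. Identifying the limit as $\RP^{\pi_e}$ then closes the argument through the continuity and performance-preserving steps above.
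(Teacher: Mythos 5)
Your overall architecture is the same as the paper's: reduce the theorem to showing $\offphatRPclip \asto \RP^{\pi_e}$, then transfer through the continuity of the return functional exactly as in Theorem~\ref{thm:ope_ARP}, with Theorem~\ref{prop:ARP_pe} identifying $J(\pi_e;\RP^{\pi_e})$ with $J(\pi_e;M)$; your SLLN/continuous-mapping treatment of the ratio estimators likewise mirrors Properties~\ref{prop:offpConvergence}--\ref{prop:offetaConvergence}. Where you genuinely diverge is at the crux. You interpret each clipped-weight expectation $\E[\,\cdot\,\rho_{\clipt:t};\pi_b]$ as a probability under a \emph{hybrid} measure ($\pi_b$ before the window, $\pi_e$ inside it) and then argue that Markovness forces the hybrid window statistics to agree with pure-$\pi_e$ ones. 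The paper never introduces a hybrid process: it splits $\rho_{0:t}=\rho_{0:(t-c)^+}\,\rho_{\clipt:t}$, asserts that the two weight segments are conditionally independent given the window abstract states $(Z_i)_{i=\clipt}^t$, applies the tower rule, and uses $\E[\rho_{0:(t-c)^+};\pi_b]=1$ to conclude $\E[\I{Z_t{=}z,Z_{t+1}{=}z'}\rho_{\clipt:t};\pi_b]=\E[\I{Z_t{=}z,Z_{t+1}{=}z'}\rho_{0:t};\pi_b]$, after which Property~\ref{prop:offpConvergence} identifies the right-hand side as $\sumt\Pr(Z_t{=}z,Z_{t+1}{=}z';\pi_e)$. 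Both routes target the same per-$t$ identity (hybrid probability equals $\pi_e$ probability), so the difference is which intermediate object does the work: yours reasons about laws of a mixed-policy process, the paper's reasons only about expectations of weight products under $\pi_b$ and never needs to mention ground states or hybrid measures at all.

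The gap is in your proposed bridge. The claim you say you would prove---that conditioned on the abstract window, the conditional distribution of the \emph{ground} state $S_t$ is insensitive to the pre-window state distribution---is strictly stronger than Definition~\ref{def:c-decodable}. That definition only asserts that the \emph{abstract} process is $c$-th order Markov separately under each of the two pure measures ($\pi_b$ and $\pi_e$); it says nothing about sufficiency of the abstract window for the underlying state, and nothing about processes generated by mixing the two policies. Single-measure Markovness of the abstract chain does not, by itself, yield the cross-measure invariance of window conditionals that your screening-off step requires, so as written the final identification does not follow from the stated hypothesis. To be fair, you should know the paper's own proof leans on a closely related unproven strengthening: its conditional-independence claim for the weight segments given $(Z_i)_{i=\clipt}^t$ is also asserted rather than derived from Definition~\ref{def:c-decodable}, and its subsequent factorization of the expectation of a product of two conditional expectations deserves more care. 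The practical difference is that the paper's device (tower rule plus the unit expectation of the pre-window weight) localizes all the delicacy in one conditional-independence statement about importance weights, whereas your formulation requires a sufficiency property of the abstraction for ground states---a noticeably stronger and harder-to-justify assumption. If you want to complete your route, you should either prove your screening-off claim from an explicitly strengthened definition (e.g., the window being a sufficient statistic for the next abstract state under arbitrary policies), or fall back to the paper's weight-splitting manipulation.
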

\begin{proof}
    \vspace{\proofspacing}
    See Appendix \ref{app:thm43}.
    \vspace{\proofspacingpost}
\end{proof}
Even when $\phi$ does not satisfy the above condition, weight clipping proves to be a practical approximation and results in low mean squared prediction error, as demonstrated empirically in Section \ref{sec:experiments}. The steps for performing off-policy evaluation by estimating $\offphatRPclip$ are highlighted in Algorithm \ref{alg:method}.

\subsection{Fantastic $\phi$'s and Where to Find Them}

\newcommand{\CLS}{CLSA}
\newcommand{\CLU}{CluSTAR}
\newcommand{\ORA}{OrA}

Discrete abstraction functions that are $c$-th order Markov with small values of $c$ represent the most suitable abstractions for enabling asymptotically correct, low-variance off-policy evaluation using \methodname.
An automated approach to discovering such abstraction functions, however, remains elusive.
In a manner reminiscent of the options framework \citep{sutton1999between}, wherein one might consider the usefulness of options before having methods for constructing options automatically,  this work emphasizes the remarkable effectiveness of state abstractions used in abstract reward processes for OPE.
It motivates a research area akin to option discovery: \textit{abstraction discovery for OPE}.

We expect the following factors to play an important role in the search for good abstraction functions:
(a) state-visitation distributions of $\pi_b$ and $\pi_e$, determining the granularity of abstraction in different parts of the state set, and
(b) the distribution shift in abstract state visitation induced by the two policies, determining the extent of weight clipping that can be applied.
Both of these are affected by properties of the underlying MDP, in particular the transition function, and in our initial analyses, we observe varying effects of similar abstractions across different MDPs (Appendix \ref{app:abstraction_discovery}).

We observe that a simple approach of randomly initializing centroids and applying $k$-means clustering \citep{macqueen1967some,lloyd1982least}, where each cluster denotes a discrete abstract state, results in abstractions that provide competitive OPE performance, often significantly outperforming existing methods.
We call this naive clustering-based abstraction method \CLU, and use it for our experiments.
In some cases, abstraction by aggregation of states can increase the difficulty of estimation of the transition function.  For example, aggregation of two states with deterministic transitions---which can be estimated perfectly from a single observation of those transitions---creates stochastic transitions between abstract states. However, in general, state aggregation tends to simplify estimation by increasing the effective sample size \cite{jiang2018notes}.

\paragraph{Recovering Existing OPE Methods from \methodname:}
Different configurations of $(\phi, c)$ induce different ARPs, $\offphatRPclip$. %
For certain configurations of $(\phi, c)$:
\begin{itemize}[itemsep=0pt, topsep=0pt,parsep=0pt, leftmargin=*]
    \item {$|\mathcal{Z}| = 1$ and no weight clipping: } Mapping all states to a single abstract state yields the \emph{weighted per-decision importance sampling} (WPDIS) estimator \citep{precup2000eligibility}.
    \item {$\mathcal{Z} = \mathcal{S}$ and $c = 1$:} Amounts to no state abstraction, and yields the maximum likelihood estimate of the MRP over states.
          The MRP is a combination of the approximate-model estimator \cite{paduraru2013off} that directly estimates the model dynamics with the evaluation policy.
\end{itemize}

The recovery of these familiar estimators at the endpoints of \methodname~highlights the unifying nature of the framework.
More importantly, the intermediate configurations of $(\phi,c)$ uncover a whole new set of OPE estimators.
The ARPs in this space often inherit a mixture of the favorable properties of both of the endpoints.
Consequently, the framework yields estimators that can significantly outperform existing methods, as shown empirically in the next section.

\section{Empirical Analysis}
\label{sec:experiments}

\newcommand{\best}{\colorbox{red!30}{\makebox[1.2em]{best}}}
\newcommand{\median}{\colorbox{gray!30}{\makebox[2.5em]{median}}}

In this section, we \textbf{(A)} analyze the performance of the set of estimators (ARPs) induced by \methodname~across different configurations of $(\phi, c)$, and \textbf{(B)}
compare the performance of the best and median ARPs from this set against existing OPE methods to demonstrate that estimators encompassed by \methodname~often outperform prior OPE methods.
We use the following domains for OPE:
\textbf{(1)} CartPole \cite{sutton2018reinforcement}: A classic control domain in OpenAI Gym \cite{brockman2016openai}.
\textbf{(2)} ICU-Sepsis \cite{anonymous2024}: An MDP that simulates treatment of sepsis in the ICU. ICU-Sepsis is built from real-world medical records obtained from the MIMIC-III dataset \cite{johnson2016mimic}, using a modified version of the process described by \citet{komorowski2018artificial}.
\textbf{(3)} Asterix from the MinAtar testbed \cite{young2019minatar}: A miniaturized version of the Atari game Asterix.
Details about each domain, and about the behavior and evaluation policies can be found in the Appendix \ref{app:experiments}.
The code is available at: \url{https://github.com/shreyasc-13/STAR}. 

\paragraph{(A) Estimator Selection:}
\begin{wrapfigure}[15]{r}{0.4\textwidth}
    \vspace{-0.5em}
    \centering
    \includegraphics[width=0.38\textwidth]{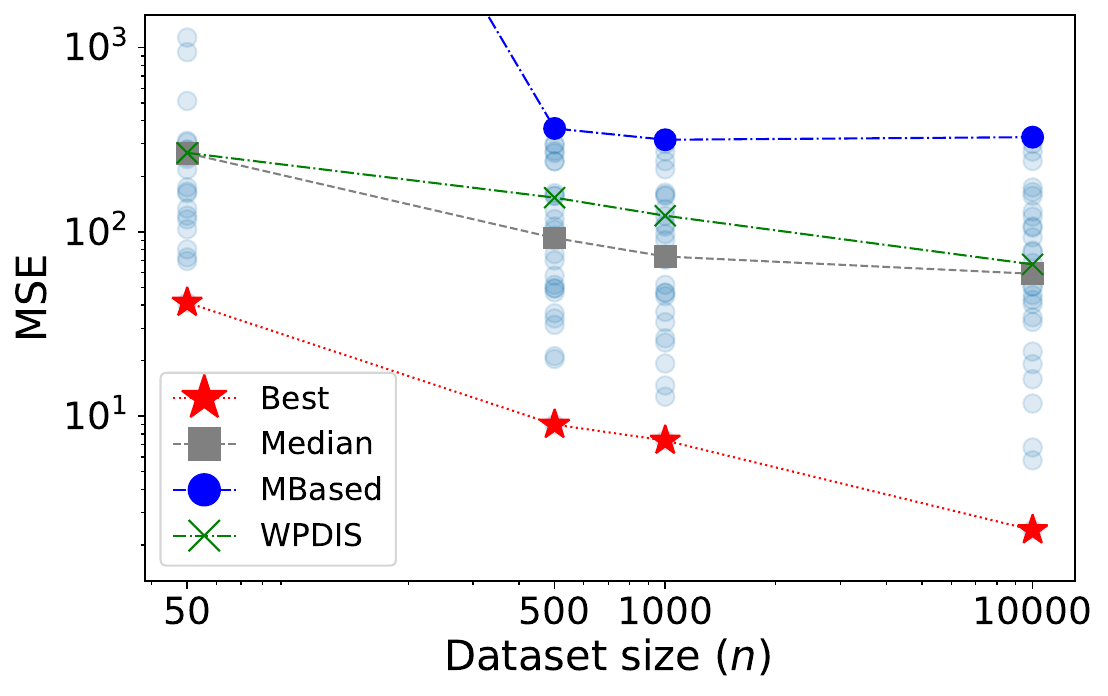}
    \caption{Mean squared prediction errors of the estimated ARPs for the set of hyperparameters swept over for CartPole.
    }
    \label{fig:hp_selection}
\end{wrapfigure}

Estimator selection presents a significant challenge for OPE \cite{su2020adaptive} due to the unavailability of a \textit{validation set}.
To select \methodname~estimators to compare against existing methods, we first report the performance of the set of the ARPs induced by \methodname, across a range of configurations of $(\phi, c)$.
We highlight the performance of the \best~and \median~estimators from this set.
For reference, we compare the mean squared prediction errors from the estimated ARPs against WPDIS and approximate-model estimator (MBased), the two endpoints of \methodname, as shown in Figure \ref{fig:hp_selection}
on the CartPole domain. The state abstraction is performed with \CLU~with the number of centroids $|\mathcal{Z}| \in \{2,4,8,16,32,64,128\}$ and the weight clipping factor $c \in \{1,2,3,4,5\}$ defining 35 ARPs, where the performance of each is indicated by \tikz\fill[blue, opacity=0.4] (0,0) circle (3pt);.
This range of $(\phi, c)$ is picked based on the intuition that (a) larger values of $c$ are expected to introduce high variance, and (b) this range of $|\mathcal{Z}|$ covers a variety of granularities of state abstraction (for a continuous problem).
The results are averages across 200 trails.
The prediction errors for the estimated ARPs are competitive with baselines. This suggests that \textit{an average estimator in \methodname~is competitive with existing OPE methods},
and even better performance may be attained by using specialized methods for abstraction discovery and estimator selection \cite{udagawa2023policy}.
Figure \ref{fig:heatmap}, in the Appendix, provides a detailed breakdown of the performance of each ARP in the set considered, presented as a heatmap. This highlights patterns observed for varying values of $|\mathcal{Z}|$ and $c$ across different domains. Further discussion and details about the best-performing configurations of $(\phi, c)$ are deferred to Appendix \ref{app:abstraction_discovery}.

\begin{figure*}[h]
    \centering
    \includegraphics[width=0.95\textwidth]{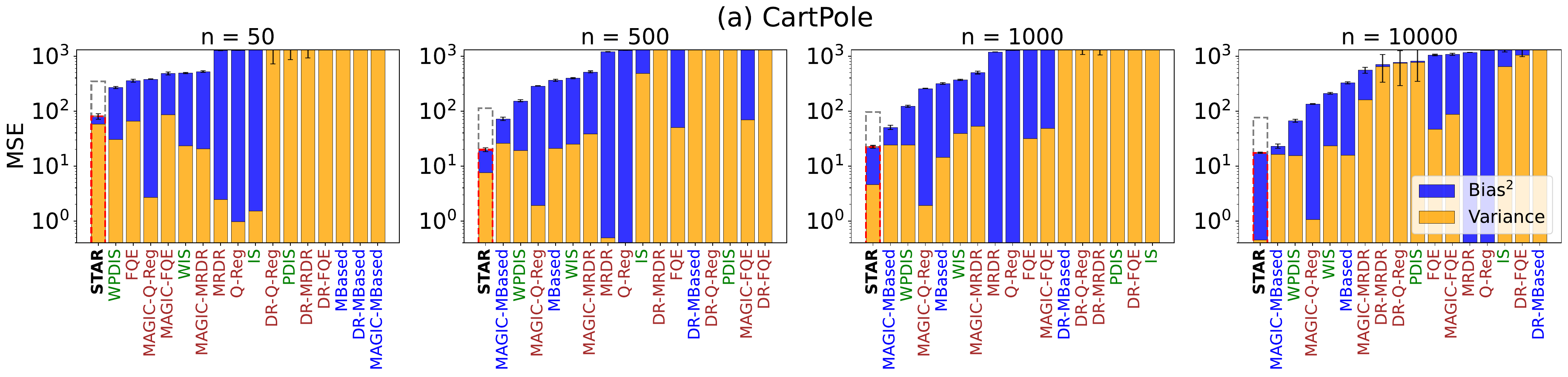}
    \includegraphics[width=0.95\textwidth]{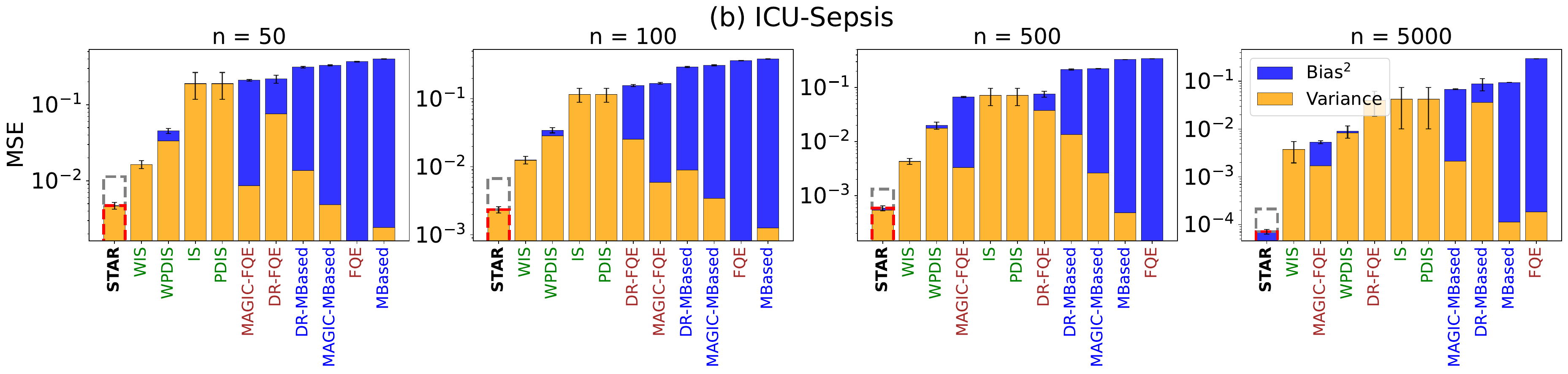}
    \includegraphics[width=0.95\textwidth]{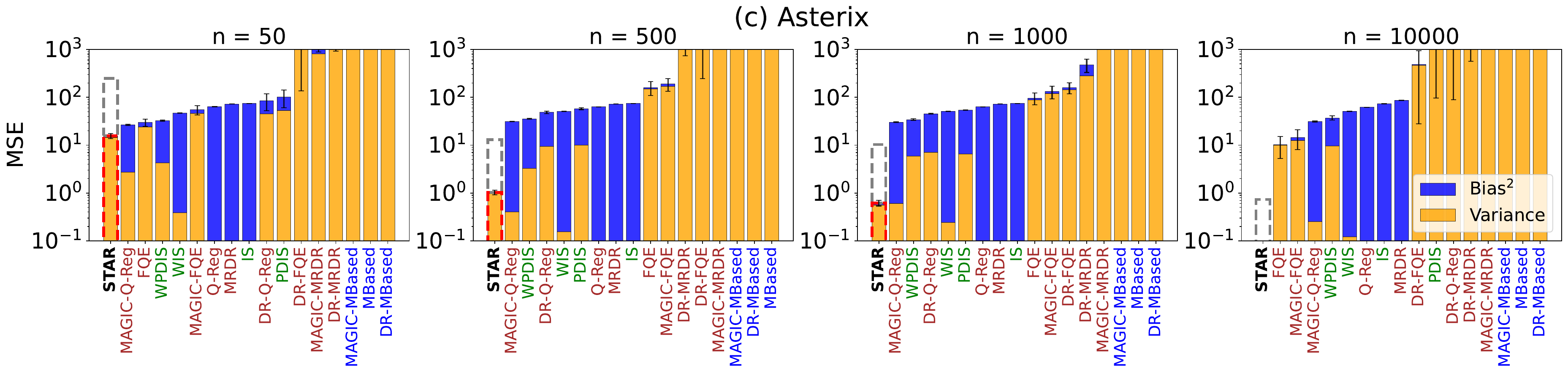}
    \caption{
        Mean squared prediction errors of \colorbox{red!30}{\makebox[1.2em]{best}} and \colorbox{gray!30}{\makebox[2.5em]{median}} ARPs from \methodname~compared against existing OPE methods.
        The empirically estimated bias-variance decomposition of the error is shown.
        The results are averaged over 200 trials, with error bars indicating standard error.
        Note: For ICU-Sepsis, regression-based methods (MRDR and Q-Reg) were computationally intractable due to the large state set, as the corresponding Weighted Least Squares methods for regression were too slow.
        In all domains and across all datasizes, the best ARP in \methodname~outperforms baselines in all cases, and the even the median estimator does so in 7 out of 12 cases.
        \label{fig:ope_compare}
    }
\end{figure*}

\paragraph{(B) ARPs Can Outperform Existing Methods for OPE:}
We compare against representative methods from the main categories of OPE methods: (a) {\color{green!70!black}IS-based methods}: Vanilla IS, Per-Decision IS, Weighted IS, Weighted Per-Decision IS \cite{precup2000eligibility};
(b) {\color{blue!90!black}Model-based methods}: that approximate a model of the MDP---MBased \cite{sutton2018reinforcement}---and directly estimate off-policy Q-values---FQE and MRDR \cite{le2019batch,farajtabar2018more}; and
(c) {\color{brown!80!black}Mixture methods}: DR \cite{jiang2016doubly} and MAGIC \cite{thomas2016data}, which blend estimates from the methods in the aforementioned categories.
We use implementations from the Caltech OPE Benchmarking Suite (COBS) \cite{voloshin2019empirical} for all methods.
The representative method for minmax style estimators, IH \cite{liu2018breaking}, is designed for the infinite horizon setting and performs poorly with $\gamma=1$ \cite{zhang2020gendice}.
Due to the instability of IH estimates in our experiments, we excluded it from comparison.

\textbf{Results [Figure \ref{fig:ope_compare} and \ref{fig:heatmap}]:}
The ranges of $|\mathcal{Z}|$ and $c$ for each domain, which induces a set of ARPs that we consider, are detailed in Appendix \ref{app:abstraction_discovery}.
Figure \ref{fig:heatmap} in Appendix \ref{app:abstraction_discovery} demonstrates the performance of each ARP from that set as a heatmap, showing that the best configuration of $(\phi, c)$ varies with the dataset size for each domain.
In critical applications like sepsis treatment, as simulated in ICU-Sepsis, where incorrect policy performance estimates can be catastrophic, the best estimators in \methodname~achieve prediction errors that are an order of magnitude lower than the best baseline.
These results in Figure \ref{fig:ope_compare} emphasize that \textit{highly compact ARPs that distill complex sequential processes are particularly effective for off-policy evaluation}.
The \best~ARP in \methodname~ for each domain abstracts: (a) the continuous state space of CartPole to $|\mathcal{Z}| = 32$ abstract states, (b) the 747 states of ICU-Sepsis to $|\mathcal{Z}| = 16$ abstract states, and (c) the 400 states of Asterix to $|\mathcal{Z}| = 8$ abstract states, to construct compact finite ARPs for OPE.
Furthermore, the horizon length of CartPole is 50, and episodes in ICU-Sepsis and Asterix go up to 120 and 75 timesteps respectively. Such relatively long horizons have proven to be challenging for prior OPE methods.
\methodname~leverages ARPs to enable OPE at scales commonly seen in practice. %

\section{Related Work}
\label{sec:related_work}

The problem of off-policy evaluation (OPE) has been extensively studied due to its relevance for practical applications of reinforcement learning \cite{murphy2001marginal,mandel2014offline}.
Extensive surveys on the topic, both theoretical \cite{uehara2022review} and empirical \cite{voloshin2019empirical,fu2021benchmarks} delineate the numerous approaches to the problem.
Model-based approaches for OPE have proven effective \cite{liu2018representation,zhang2021autoregressive} but are restricted by the model class used.
In this work, we use state abstraction to define compact models called abstract reward processes, and demonstrate their effectiveness as off-policy estimators.
Historically, state abstraction research has focused on grouping similar states in a way that does not change the essence of the underlying problem \cite{li2006towards,ravindran2004algebraic,abel2019theory}, to reduce the complexity of the problem.
However, the use of state abstractions for OPE remains under-explored.
\citet{pavse2023scaling} show that the use of state abstraction with marginalized importance sampling achieves variance reduction in high-dimensional state spaces, but their approach does not use abstraction to construct models.
\citet{jiang2015abstraction} study abstraction selection for model-based RL, balancing model complexity and policy value suboptimality.
Abstraction discovery or learning has focussed on discretization of continuous state spaces to reduce problem complexity \cite{sinclair2019adaptive}, and distilling the Markov features \cite{allen2021learning} or reward relevant features \cite{ferns2012metrics,wang2022denoised}.

\section{Discussion and Conclusion}

In this paper, we have introduced a new framework for consistent model-based off-policy evaluation.
This framework leverages state abstraction to prevent model class mismatch along with importance sampling to consistently learn models from off-policy data.
Unlike traditional model-based methods, our approach eliminates the need for model class assumptions and provides theoretical guarantees for the obtained performance estimates.
Moreover, using state abstraction increases the effective sample size \cite{jiang2018notes}, which is particularly beneficial in limited data regimes.
Importantly, this work presents a framework with a new approach to OPE, rather than a specific new method.
Estimators that lie within this framework significantly outperform existing OPE methods, with the best estimator consistently outperforming all baselines, as demonstrated in our empirical evaluation.

The framework has two main limitations: it requires knowledge of the probabilities of observed actions under the behavior policy, which may not always be available, and a principled method for selecting well-performing configurations of the abstraction function and the weight clipping factor remain elusive.
Combining this work with regression IS \cite{hanna2019importance} would be a practical extension that addresses the first limitation.
Additionally, a data-driven approach to automated estimator selection based on characteristics of the domain, dataset sizes, and other factors, as suggested by \citet{su2020adaptive}, would enhance its practical application.

Our findings indicate that even a simple class of abstraction functions can provide competitive OPE performance.
We theoretically demonstrate the existence of certain abstraction functions that may offer better performance. %
Investigating the properties of abstraction functions and developing automated approaches to \textit{abstraction discovery} for ARPs are promising directions for future work on creating high-performing OPE methods.

\paragraph{Acknowledgements} We thank Yash Chandak, Mohammad Ghavamzadeh and Dhawal Gupta for their helpful discussions and feedback on this work. We would also like to thank Josiah Hanna, Bo Liu, Cameron Allen, and the anonymous reviewers for their feedback.

\bibliography{references}
\bibliographystyle{plainnat}

\newpage
\appendix
\section{Preliminaries}
\label{app:prelimimnaries}

\paragraph{Consistency and Almost Sure Convergence}
Let $\hat \theta_n$ denote the estimator of a statistic $\theta$, estimated from $n$ data points. %
As the amount of data approaches infinity, i.e., as $n \to \infty$, the estimator is said to \textit{converge almost surely} to $\theta$ if and only if
\[ \Pr\left( \lim_{n \to \infty} \hat{\theta}_n = \theta \right) = 1.\]
We write $\theta_n \asto \theta$ to denote that the sequence $\theta_n$ converges almost surely to $\theta$.
An estimator that converges almost surely to the true value of the statistic is said to be (strongly) \textbf{consistent}.

\paragraph{Continuous Mapping Theorem}
The continuous mapping theorem \cite{billingsley2013convergence,van2000asymptotic} states that if a sequence of random variables $(X_i)_{i=1}^n$ converges almost surely to a random variable $X$, a function $f$ has discontinuity points $D_f$, and $\Pr(X \in D_f)=0$, then the sequence $\left(f(X_i)\right)_{i=1}^n$ converges almost surely to $f(X)$.

\paragraph{Tower Rule}
For random variables $X$ and $Y$ defined on the same probability space, the tower rule states that the expected value of the conditional expected value of $X$ given $Y$ is the same as the expected value of $X$, i.e.,
\[\E\left[X \right] = \E\left[\E\left[X \mid Y\right]\right].\]

\subsection{Additional Notes on MRPs and ARPs}
\label{app:notes_mrp_arp}

Here we provide additional details regarding the definitions of MDPs and MRPs. Specifically, we show how the expression of the transition and reward functions of the MRP can be simplified when $\mathcal{X} = \mathcal{S}$ and discuss how to handle termination in ARPs that are derived from finite-horizon MDPs.

\subsubsection{Simplification of Equation \ref{eq:mrp}}
The expressions for the components of a Markov reward process (MRP) can be simplified by using the Markov property.
The transition function simplifies as:
\[\begin{split}
        p^\pi(x,x') =& \frac{\sumt \Pr({S_{t+1} = x', {S_t} = x; \pi})}{\sumt \Pr({S_t} = x; \pi)}\\
        =& \frac{\sumt \Pr(S_{t+1} = x' \mid S_t = x; \pi) \Pr(S_t = x; \pi)}{\sumt \Pr(S_t = x; \pi)}\\
        =& \frac{\sumt \sum_{a \in \mathcal{A}} \Pr(S_{t+1} = x' \mid S_t = x, A_t = a; \pi) \Pr(A_t=a | S_t=x) \Pr(S_t = x; \pi)}{\sumt \Pr(S_t = x; \pi)}\\
        =& \frac{\sumt \sum_{a \in \mathcal{A}} p(x,a,x') \pi(x,a) \Pr(S_t = x; \pi)}{\sumt \Pr(S_t = x; \pi)}\\
        =& \frac{ \sum_{a \in \mathcal{A}} p(x,a,x') \pi(x,a) \left(\sumt\Pr(S_t = x; \pi)\right)}{\sumt \Pr(S_t = x; \pi)}\\
        =& \sum_{a \in \mathcal{A}} p(x,a,x') \pi(x,a) .\\
    \end{split}\]
Similarly, the reward function simplifies as:
\[\begin{split}
        r^\pi(x) =& \frac{\sumt \E\left[R_t |{ S_t}=x; \pi\right]\Pr({S_t} = x; \pi)}{\sumt \Pr({S_t} = x; \pi)} \\
        =& \frac{\sumt \sum_{a \in \mathcal{A}} r(x, a) \pi(x,a) \Pr({S_t} = x; {\pi}) }{\sumt \Pr({S_t} = x; {\pi})}\\
        =& \frac{\sum_{a \in \mathcal{A}} r(x, a) \pi(x,a) \left(\sumt\Pr({S_t} = x; {\pi})\right) }{\sumt \Pr({S_t} = x; {\pi})}\\
        =& \sum_{a \in \mathcal{A}} r(x,a) \pi(x,a).
    \end{split}\]

\subsubsection{Handling termination in an MDPs, MRPs, and ARPs}
Each episode in a finite-horizon MDP concludes by some timestep $T \in \mathbb{N}$, referred to as the \textit{termination} of that episode.
In practice, there are two common ways of modeling termination of episodes in finite-horizon MDPs: 1) including an absorbing state $s_\infty$ in the state set, or 2) introducing a termination function $\beta: \mathcal{S} \to [0,1]$ representing the probability of termination of an episode from each state, i.e., $\beta(s) \coloneqq \Pr(\textrm{terminate} |  S_t=s)$.
In the first case, by timestep $T$ the process transitions into the absorbing $s_\infty$ after which it continually transitions back to $s_\infty$ getting a reward of zero. In the second case, after timestep $T$ the process stops and there are no subsequent samples. Expressions involving sums over samples in an epsiode are denoted correspondingly, for example, the expected return $J(\pi) \coloneqq \E[\sum_{t=1}^\infty R_t; \pi]$ in the first case, and $J(\pi) \coloneqq \E[\sum_{t=1}^T R_t; \pi]$ in the second.
Both approaches are mathematically equivalent and correspondingly, MRPs induced by the combination of finite-horizon MDPs with a policy can model termination in either way.

However, termination in ARPs requires special attention.
The introduction of an absorbing abstract state $z_\infty$ places a condition on the abstraction function $\phi$---it requires the abstraction function to map \textit{only} $s_\infty$, and no other state, to $z_\infty$.
In this work, we \textit{implement} termination in code by estimating abstract termination functions that represent the probability of termination from each abstract state.
Denote the abstract termination function of an ARP by $\beta_\phi^\pi: \mathcal{Z} \to [0,1]$.
It must be noted that both approaches continue to be mathematically equivalent. The probability of transitioning from any abstract state $z$ to $z_\infty$ is the same as the the probability of termination from that state, i.e., $\beta_\phi^\pi(z) = \pRP^\pi(z, z_\infty)$. Note that $$\pRP^\pi(z_\infty, z) = \begin{cases}
        0 &\mbox{if } z \neq z_\infty \\
        1 &\mbox{if } z = z_\infty.
    \end{cases}$$
Similarly, the reward function is zero for the absorbing abstract state: $\rRP^\pi(z_\infty) = 0$. 
It is mathematically more succint to model termination with $z_\infty$, and we use this (equivalent) approach in our theoretical analysis.

In our implementation, $z_\infty$ is not defined and thus the components of $\RP$ do not take $z_\infty$ as input. The equivalant form of the expected return of an ARP, that models a termination function, is then given by:
\[
    \begin{split}
        J(\pi; \RP^\pi) &= \rRP^\pi \dzeroRP + \left(\mathrm{I} - \text{diag}(\beta_\phi^\pi)\right) \left[  \pRP^\pi \rRP^\pi \dzeroRP  +  \left(\mathrm{I} - \text{diag}(\beta_\phi^\pi)\right) \left[ (\pRP^\pi)^2 \rRP^\pi \dzeroRP + \ldots \right] \right] \\
        & = \rRP^\pi \dzeroRP + \left(\mathrm{I} - \text{diag}(\beta_\phi^\pi)\right) \pRP^\pi \rRP^\pi \dzeroRP + \left(\mathrm{I} - \text{diag}(\beta_\phi^\pi)\right)^2 (\pRP^\pi)^2 \rRP^\pi \dzeroRP + \ldots\\
        &= \sum_{k=0}^\infty \left( \left(\mathrm{I} - \text{diag}(\beta_\phi^\pi)\right) \pRP^\pi \right)^k \rRP^\pi \dzeroRP.
    \end{split}
\]
Correspondingly, the closed form expression for the expected return that accounts for the termination function is given by,
\begin{equation}
    J(\pi; \RP^\pi) =\left(\mathrm{I} - \left(\mathrm{I} - \text{diag}(\beta_\phi^\pi) \right)\pRP^\pi\right)^{-1} \rRP^\pi(z) \dzeroRP.
\end{equation}
Note that $\left(\mathrm{I} - \left(\mathrm{I} - \text{diag}(\beta_\phi^\pi) \right) \pRP^\pi\right)$ is an invertible matrix, as the left hand side of the expression, $J(\pi; \RP^\pi)$, is equal to the expected return $J(\pi; M)$ of $\pi$ by Theorem \ref{prop:ARP_pe}, which is bounded.

\newpage
\section{Proofs of Theoretical Results}
\label{app:proofs}

In this section, we provide proofs of the theorems and lemmas in the main text. We start with Theorem \ref{prop:ARP_pe} that states that ARPs are \textit{performance-preserving}.

\subsection{ARPs are Performance Preserving}
\label{app:thm31}

\begingroup
\renewcommand{\thetheorem}{\ref{prop:ARP_pe}}
\begin{theorem}
    $\forall ~\phi \in \Phi$, the performance of a policy $\pi$ is equal to the expected return of the abstract reward process $\RP^\pi$ defined from MDP $M$, i.e., $J(\pi; \RP^\pi) = J(\pi; M)$.\end{theorem}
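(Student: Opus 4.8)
The plan is to prove the identity by an \emph{occupancy-matching} argument: I would show that rolling out the abstract reward process $\RP^\pi$ reproduces exactly the time-aggregated abstract-state visitation frequencies of the original MDP, after which equality of the returns follows from the law of total expectation. First I would put both sides into a common ``occupancy $\times$ reward'' form. For the MDP, linearity of expectation and the law of total expectation over the value of $\phi(S_t)$ give
\[
J(\pi; M) = \sum_t \E[R_t; \pi] = \sum_t \sum_{z \in \mathcal{Z}} \E[R_t \mid \phi(S_t) = z; \pi]\,\Pr(\phi(S_t) = z; \pi).
\]
For the ARP, since $(\mathrm{I} - \pRP^\pi)^{-1} = \sum_{k \ge 0}(\pRP^\pi)^k$ accumulates the expected number of visits to each abstract state under the MRP dynamics started from $\dzeroRP$, the closed form $J(\pi; \RP^\pi) = (\mathrm{I} - \pRP^\pi)^{-1}\rRP^\pi\dzeroRP$ decomposes as
\[
J(\pi; \RP^\pi) = \sum_{z \in \mathcal{Z}} m^\pi(z)\,\rRP^\pi(z),
\]
where $m^\pi(z)$ is the (unnormalized) expected number of visits to $z$ in the abstract chain.

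The crux is the key lemma $m^\pi(z) = d^\pi(z)$, where $d^\pi(z) := \sum_t \Pr(\phi(S_t) = z; \pi)$ is the true aggregated abstract-state occupancy in the MDP. I would prove this by verifying that $d^\pi$ satisfies the MRP flow equation
\[
m(z') = \dzeroRP(z') + \sum_{z \in \mathcal{Z}} m(z)\,\pRP^\pi(z, z'),
\]
which, together with invertibility of $\mathrm{I} - \pRP^\pi$, characterizes $m^\pi$ uniquely. Substituting the definition of $\pRP^\pi$ from Equation \eqref{eq:arp}, the normalizing denominator is exactly $d^\pi(z)$ and cancels against the factor $d^\pi(z)$ multiplying it (abstract states with $d^\pi(z) = 0$ are never visited and contribute nothing); what remains is $\sum_{z}\sum_t \Pr(\phi(S_{t+1}) = z', \phi(S_t) = z; \pi)$, which marginalizes over $z$ to $\sum_t \Pr(\phi(S_{t+1}) = z'; \pi)$. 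Adding $\dzeroRP(z') = \Pr(\phi(S_0) = z')$ re-indexes this sum back to $\sum_t \Pr(\phi(S_t) = z'; \pi) = d^\pi(z')$, confirming the flow equation and hence $m^\pi = d^\pi$.

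Finally, I would substitute $m^\pi(z) = d^\pi(z)$ into $J(\pi; \RP^\pi) = \sum_z m^\pi(z)\rRP^\pi(z)$ and expand $\rRP^\pi(z)$ using Equation \eqref{eq:arp}; the occupancy $d^\pi(z)$ once again cancels the denominator of $\rRP^\pi(z)$, leaving $\sum_z \sum_t \E[R_t \mid \phi(S_t) = z; \pi]\,\Pr(\phi(S_t) = z; \pi)$, which is precisely the expression for $J(\pi; M)$ derived above, completing the proof for an arbitrary $\phi \in \Phi$. The main obstacle is establishing the occupancy-matching lemma: this is where the particular \emph{self-normalizing} form of $\pRP^\pi$ and $\rRP^\pi$ is indispensable, as their denominators are engineered exactly so that the abstract chain's visitation frequencies coincide with the MDP's despite the information discarded by $\phi$. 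I would also note that termination is handled by the absorbing abstract state $z_\infty$ (Appendix \ref{app:notes_mrp_arp}), which keeps the chain transient and all sums over $t$ well-defined so that $\mathrm{I} - \pRP^\pi$ is indeed invertible.
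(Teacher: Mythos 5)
Your proof is correct, and it takes a genuinely different route from the paper's own proof (Appendix \ref{app:thm31}). The paper asserts two ``equivalent forms'' of $J(\pi;\RP^\pi)$, namely $\sumt \E[R_t;\pi]$ and $\sumt\sum_{z}\Pr(Z_t{=}z;\pi)\,\rRP^\pi(z)$, and then rewrites $\rRP^\pi(z)$ and $\Pr(Z_t{=}z;\pi)$ in terms of the MDP's undiscounted state distribution $\psi^\pi(s)\propto\sumt\Pr(S_t{=}s;\pi)$, finishing with the same cancellation you use. In doing so, it implicitly equates the ARP chain's marginal at each time $t$ with the MDP's abstract-state marginal $\sum_{s}\Pr(S_t{=}s;\pi)\I{\phi(s)=z}$; for non-Markov $\phi$ this per-timestep identification is false in general (a deterministic chain $A\to B\to C$ with $\phi(A)=\phi(B)$ already breaks it: the ARP self-loops with probability $1/2$ while the MDP reaches $\phi(C)$ only at $t=2$), and only the time-aggregated identity holds. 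Your occupancy-matching lemma $m^\pi=d^\pi$, proved via the flow equation and uniqueness of its solution, is exactly that aggregated identity, so your argument isolates and rigorously establishes the one nontrivial fact the paper's proof glosses over, and it makes explicit where the self-normalizing denominators of $\pRP^\pi$ and $\rRP^\pi$ are indispensable. What the paper's route buys is brevity---everything stays inside expectations over the MDP and the result drops out of the law of total probability after cancellation; what yours buys is structure and rigor, with both sides reduced to a common occupancy-times-reward form. One point to tighten: with the absorbing state $z_\infty$ included, $\mathrm{I}-\pRP^\pi$ is singular (the row of $\pRP^\pi$ at $z_\infty$ is a standard basis vector), so the invertibility underlying your uniqueness step should be stated on the transient block of $\pRP^\pi$, or equivalently via the termination-function form of the return in Appendix \ref{app:notes_mrp_arp}; since $\rRP^\pi(z_\infty)=0$, this convention (which the paper also adopts) does not otherwise affect your argument.
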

\endgroup
\begin{proof}
    This result states that despite the use of a discrete state abstraction, the expected return of the ARP $\RP^\pi$ is equal to the performance of policy $\pi$.
    To prove this, we leverage two equivalent forms of defining the expected return of a policy. The first form:
    \begin{equation}
        J(\pi; \RP^\pi) = \sumt \E[R_t; \pi],
        \label{eq:expret1}
    \end{equation}
    defines the expected return as a sum of the expected value of the reward at each timestep, where the distribution of rewards at each timestep is governed by the components of the ARP, namely, $\dzeroRP, \pRP^\pi$ and $\rRP^\pi$. 
    The second, equivalent, form of expressing the expected return is given by:
    \begin{equation}
        J(\pi; \RP^\pi) = \sumt \sum_{z \in \mathcal{Z}} \Pr(Z_t = z; \pi) \E[R_t | Z_t {=} z ;\pi] = \sumt \sum_{z \in \mathcal{Z}} \Pr(Z_t = z ;\pi)\rRP^\pi(z),
        \label{eq:expret2}
    \end{equation}
    where the expected return is denoted as a sum of the reward at each abstract state mutliplied by the visitation frequency of that abstract state. 
    Both Equations \ref{eq:expret1} and \ref{eq:expret2} are equivalent, i.e.,
    \begin{equation}
        J(\pi; \RP^\pi) = \sumt \E[R_t; \pi] = \sumt \sum_{z \in \mathcal{Z}} \Pr(Z_t = z; \pi)\rRP^\pi(z).
    \end{equation}
    Next, we denote the terms in Equation \ref{eq:expret2} in terms of the states of the underlying MDP to show the final result.
    The reward function $\rRP^\pi$ can be expressed as,
    \[
        \begin{split}
            \rRP^\pi({{z}}) \coloneqq& \frac{\sum_t \E[R_t {\mid} \phi({S_t}) {=} z; {\pi}] \Pr(\phi({S_t}) {=} z; {\pi})}{\sum_t \Pr(\phi({S_t}) {=} z; {\pi})}\\
            =& \frac{\sum_t \sum_{s \in \mathcal{S}} \I{\phi(s) = z} \E[R_t {\mid} {S_t} {=} s; {\pi}] \Pr({S_t} {=} s; {\pi})}{\sum_t \sum_{s \in \mathcal{S}} \I{\phi(s) = z} \Pr({S_t} {=} s; {\pi})}\\
            =& \frac{\sum_t \sum_{s \in \mathcal{S}, a \in \mathcal{A}} \I{\phi(s) = z} r(s,a) \pi(s,a) \Pr({S_t} {=} s; {\pi})}{\sum_t \sum_{s \in \mathcal{S}} \I{\phi(s) = z} \Pr({S_t} {=} s; {\pi})}\\
            =& \frac{\sum_{s \in \mathcal{S}, a \in \mathcal{A}} \I{\phi(s) = z} r(s,a) \pi(s,a) ({\color{orange}\sumt \Pr({S_t} {=} s; {\pi})})}{\sum_{s \in \mathcal{S}} \I{\phi(s) = z}
                ({\color{orange}\sumt \Pr({S_t} {=} s; {\pi})})}\\
            =& \frac{\sum_{s \in \mathcal{S}, a \in \mathcal{A}} {\color{orange}\psi^\pi(s)} \pi(s,a) r(s,a) \I{\phi(s) = z}}{\sum_{s \in \mathcal{S}} {\color{orange}\psi^\pi(s)} \I{\phi(s) = z}},
        \end{split}
    \] 
    where the undiscounted state distribution \citep[Section 9.2]{sutton2018reinforcement} under policy $\pi$ is denoted by $\psi^\pi(s)\propto \sumt \Pr(S_t=s;\pi)$, where $\psi^\pi(s_\infty)$ is set to be proportional to any constant value. The normalization constant, $\kappa$, that makes $\psi^\pi(s)$ a valid distribution cancels out from both the numerator and the denominator of the above expression.
    The term $ \Pr(Z_t = z; \pi)$ can be expressed in terms of states as,
    \[
        \Pr(Z_t = z; \pi) = \sum_{s \in \mathcal{S}} \Pr(S_t = s; \pi) \I{\phi(s)= z}.
    \]
    Substituting these expressions into Equation \ref{eq:expret2} gives,
    \[\begin{split}
            J(\pi; \RP^\pi) &= \sumt \sum_{z \in \mathcal{Z}} \Pr(Z_t = z ;\pi)\rRP^\pi(z) \\
            &= \sumt \sum_{z \in \mathcal{Z}} \left( \sum_{s \in \mathcal{S}} \Pr(S_t = s; \pi) \I{\phi(s)= z} \right) \left(\frac{\sum_{s \in \mathcal{S}, a \in \mathcal{A}} \psi^\pi(s) \pi(s,a) r(s,a) \I{\phi(s) = z}}{\sum_{s \in \mathcal{S}} \psi^\pi(s) \I{\phi(s) = z}} \right)\\
            &= \sum_{z \in \mathcal{Z}} \left( \sum_{s \in \mathcal{S}}  \sumt \Pr(S_t = s; \pi) \I{\phi(s)= z} \right) \left(\frac{\sum_{s \in \mathcal{S}, a \in \mathcal{A}} \psi^\pi(s) \pi(s,a) r(s,a) \I{\phi(s) = z}}{\sum_{s \in \mathcal{S}} \psi^\pi(s) \I{\phi(s) = z}} \right)\\
            &= \sum_{z \in \mathcal{Z}}
            \left( {\color{orange} \sum_{s \in \mathcal{S}}  {\color{black} \kappa} ~\psi^\pi(s) \I{\phi(s)= z}} \right) \left(\frac{\sum_{s \in \mathcal{S}, a \in \mathcal{A}} \psi^\pi(s) \pi(s,a) r(s,a) \I{\phi(s) = z}}{{\color{orange}\sum_{s \in \mathcal{S}} \psi^\pi(s) \I{\phi(s) = z}}} \right)\\
            &= \sum_{z \in \mathcal{Z}} \sum_{s \in \mathcal{S}, a \in \mathcal{A}} {\color{blue} \kappa ~\psi^\pi(s)} \pi(s,a) r(s,a) \I{\phi(s) = z}\\
            &\overset{(a)}{=} \sum_{s \in \mathcal{S}, a \in \mathcal{A}} {\color{blue} \sum_t \Pr(S_t = s; \pi)} \pi(s,a) r(s,a) \\
            &= J(\pi; M),
        \end{split}\]
    where (a) follows from the law of total probability \cite{zwillinger1999crc}.

\end{proof}

We have established that ARPs are performance-preserving. Next, we prove Lemma \ref{thm:onpolicy}, which states that the performance estimate obtained from an estimated model of the ARP converges almost surely to the performance of the policy that induces it. 
In order to do so, we first introduce properties that will be useful in the proof of Lemma \ref{thm:onpolicy}. 

\subsection{Estimation of ARPs from On-Policy Data}
\label{app:lemm32}

At a high level, we prove Lemma \ref{thm:onpolicy} by first studying the almost sure convergence of the estimated transition function, reward function, and initial state distributions. 
Once the almost sure convergence of these components has been established, we reason about the implications for the convergence of the policy performance predictions that result from these terms. 
We begin by studying the almost sure convergence of the transition function.

\begin{property}
\label{prop:pConvergence}
    For all abstract states $z \in \mathcal Z$ and $z' \in \mathcal Z$, if 
    \[
        \sumt \Pr(Z_t=z;\pi) \neq 0,
    \] 
    then
    \begin{equation}
        \hatpRP^\pi(z,z')\asto \pRP^\pi(z,z').
    \end{equation}
\end{property}
\begin{proof}
    Recall that 
\[ \begin{split}
            \hatpRP^\pi(z,z') = \frac{\sum_{i=1}^n\sum_{t} \I{Z^i_t = z, Z^i_{t+1}=z'} }{\sum_{i=1}^n\sum_{t} \I{Z^i_t = z} }.
        \end{split}
    \]
    Let $X_n \coloneqq \frac{1}{n}\sum_{i=1}^n\sum_{t} \I{Z^i_t = z, Z^i_{t+1}=z'} $ and $Y_n \coloneqq \frac{1}{n} \sum_{i=1}^n\sum_{t} \I{Z^i_t = z}$.
    We can then rewrite $\hatpRP^\pi(z,z')$ as
    \[\hatpRP^\pi(z,z') = \frac{X_n}{Y_n},\]
    which is a continuous function of $X_n$ and $Y_n$ when $Y_n > 0$. 
    At a high level, we will show what $X_n$ and $Y_n$ each converge to almost surely, and will then apply the continuous mapping theorem to reason about the almost sure convergence of $\hatpRP^\pi(z,z')$.

    However, there may be some abstract states $\Tilde{Z} \subset \mathcal{Z}$ that are not reached and thus are not observed in the data. Such abstract states pose a problem: If $\tilde z \in \tilde Z$, then $Y_n$ for $\hatpRP^\pi(\tilde z,z')$ will be zero, and so $\hatpRP^\pi(\tilde z,z')$ will be undefined and not a continuous function of $X_n$ and $Y_n$ (which must be handled appropriately in the proof of consistency). 
    First, to ensure that the ARP is well-defined even when $\tilde Z$ is not empty, for abstract states $\tilde z \in \tilde Z$ (abstract states that were not observed in the data), %
    special values can be hardcoded into the transition function, reward functions, and initial state distribution so that the components of the ARP continue to be well-defined.\footnote{The particular behavior of the ARP in these unobserved abstract states is not of consequence. Since they are unreached, the behavior in these abstract states does not impact the expected return of the ARP. However, we wish to ensure that the transition function, reward function, and initial state distribution still represent a well-defined ARP.} 
    In particular, for $\Tilde{z} \in \Tilde{Z}$, the transition function can be set to self-transition back to $\Tilde{z}$ with probability 1, i.e., $\hatpRP^\pi(\Tilde{z}, \Tilde{z}) = 1$ and $\hatpRP^\pi(\Tilde{z}, \bar{z}) = 0$ for all $\bar{z} \notin \Tilde{Z}$, the reward function is set to $\hatrRP^\pi(\Tilde{z}) = 0$ and the initial abstract state distribution is $\hatdzeroRP^\pi(\Tilde{z}) = 0$ for all $\Tilde{z} \in \Tilde{Z}$.\footnote{Note that the initial state distribution is not actually a problem like the transition and reward functions---the previous definitions already ensured that $\hatdzeroRP^\pi(\Tilde{z}) = 0$.}

    To reason about the almost sure convergence of $\hatpRP^\pi(z,z')$, first consider $\lim_{n \to \infty}X_n$,
    \[\begin{split}
            \lim_{n \to \infty}X_n =& \lim_{n \to \infty} \frac{1}{n}\sum_{i=1}^n\sum_{t} \I{Z^i_t = z, Z^i_{t+1}=z'}\\
            =& \lim_{n \to \infty} \sum_t \left(\frac{1}{n}\sum_{i=1}^n  \I{Z^i_t = z, Z^i_{t+1}=z'} \right). \\
        \end{split}\]
    Note that $\I{Z^i_t = z, Z^i_{t+1}=z'}$ is independent across episodes (for each $i$),
    has bounded variance,
    and has the same mean for each episode.
    By Kolmogorov's strong law of large numbers \cite{sen1994large},
    \[\begin{split}
            \frac{1}{n}\sum_{i=1}^n  \I{Z^i_t = z, Z^i_{t+1}=z'} \asto & \E\left[ \I{Z_t = z, Z_{t+1}=z'} ; \pi \right],
        \end{split}\]
    This convergence guarantee holds for all $t$. 
    From the definition of almost sure convergence, this means that 
    \begin{equation}
    \forall t \in \mathbb N,\,\Pr \left ( \lim_{n \to \infty} \frac{1}{n}\sum_{i=1}^n  \I{Z^i_t = z, Z^i_{t+1}=z'} = \E\left[ \I{Z_t = z, Z_{t+1}=z'} ; \pi \right]\right) = 1,
    \end{equation}
    where the expectation results from the use of the policy $\pi$ that generated dataset $\mathcal{D}^{(\pi)}_n$.
    By the countable additivity property of probability measures (and the fact that the number of time steps is countable), this implies the following (notice that $\forall t \in \mathbb N$ is now inside the statement of probability):
    \begin{equation}
    \Pr \left (\forall t \in \mathbb N,\,\lim_{n \to \infty} \frac{1}{n}\sum_{i=1}^n  \I{Z^i_t = z, Z^i_{t+1}=z'} = \E\left[ \I{Z_t = z, Z_{t+1}=z'} ; \pi \right]\right) = 1.
    \end{equation}
    Hence, 
    \begin{equation}
    \Pr \left (\sum_t \lim_{n \to \infty} \frac{1}{n}\sum_{i=1}^n  \I{Z^i_t = z, Z^i_{t+1}=z'} = \sum_t \E\left[ \I{Z_t = z, Z_{t+1}=z'} ; \pi \right]\right) = 1.
    \end{equation}
    Next, by the dominated convergence theorem \cite[Theorem 1.19]{gariepy2015measure}, the summation over time and limit over $n$ can be interchanged, giving:
    \begin{equation}
    \Pr \left ( \lim_{n \to \infty} \sum_t \frac{1}{n}\sum_{i=1}^n  \I{Z^i_t = z, Z^i_{t+1}=z'} = \sum_t \E\left[ \I{Z_t = z, Z_{t+1}=z'} ; \pi \right]\right) = 1.
    \end{equation}
    Returning to $\asto$ notation, this means that
    \begin{equation}
        \begin{split}
        \label{eq:oneInAppendixPhil}
            \sum_t \left(\frac{1}{n}\sum_{i=1}^n  \I{Z^i_t = z, Z^i_{t+1}=z'} \right) \asto & \sumt \E\left[ \I{Z_t = z, Z_{t+1}=z'} ; \pi \right].
        \end{split}
    \end{equation}
    The expected value of the indicator is equal to the probability of the event, and so
    \begin{equation}
        \begin{split}
            \sumt \E\left[ \I{Z_t = {z}, Z_{t+1}={z'}} ;\pi \right]
            &= \sumt \Pr(Z_t = z, Z_{t+1} = z'; \pi). \\
        \end{split}
    \end{equation}
    This, combined with the fact that the left hand side of Equation \ref{eq:oneInAppendixPhil} is $X_n$ means that
    \begin{equation}
        \begin{split}
        \label{eq:XnAlmostSureConvergence}
            X_n \asto & \sumt \Pr(Z_t = z, Z_{t+1} = z'; \pi).
        \end{split}
    \end{equation}
    Similarly, these same exact steps can be followed to show that %
    \begin{equation}
        \label{eq:YnAlmostSureConvergence}
        Y_n \asto \sumt \Pr(Z_t=z;\pi). 
    \end{equation}

    Let $\Psi_i=(X_i,Y_i)$, so that $\Psi_n=(X_i,Y_i)_{i=1}^n$ is a sequence of vector-valued random variables. 
    Since $X_n \asto \sumt \Pr(Z_t = z, Z_{t+1} = z'; \pi)$ and $Y_n \asto \sumt \Pr(Z_t=z;\pi)$, we have that 
    \begin{equation}
        \Psi_n \asto \left (\sumt \Pr(Z_t = z, Z_{t+1} = z'; \pi),\, \sumt \Pr(Z_t=z;\pi) \right ).
    \end{equation}
    Consider the function $f:\mathbb R^2 \to \mathbb R$, defined as:
    \begin{equation}
        \label{eq:appendixFDefn}
        f(x,y)=\begin{cases}
            \frac{x}{y} &\mbox{if } y \neq 0\vspace{0.1cm}\\
            \hatpRP^\pi(z,z') &\mbox{otherwise.}
        \end{cases}
    \end{equation} 
    Note that discontinuities of $f$ may occur when $y=0$. 
    Applying the continuous mapping theorem, we have that 
    \begin{equation}
        \label{eq:klj246lkjs}
        f(\Psi_n) \asto \frac{\sumt \Pr(Z_t = z, Z_{t+1} = z'; \pi)}{\sumt \Pr(Z_t=z;\pi)}
    \end{equation}
    if\footnote{This condition arises due to the discontinuity of $f$ when the second argument is zero. See the statement of the continuous mapping theorem in Appendix \ref{app:prelimimnaries} for details.} 
    \begin{equation}
        \label{eq:lkj465lkj}
        \Pr \left (\sumt \Pr(Z_t=z;\pi) = 0\right )=0.
    \end{equation} 
    In our case, $\sumt \Pr(Z_t=z;\pi)$ is a constant, and so the condition in Equation \ref{eq:lkj465lkj} is simply:
    \begin{equation}
        \label{eq:P_constraint}
        \sumt \Pr(Z_t=z;\pi) \neq 0.
    \end{equation} 
    Also, Equation \ref{eq:klj246lkjs} can be simplified to:
    \begin{equation}
        \label{eq:appendixResultasdf}
        \hatpRP^\pi(z,z')\asto \pRP^\pi(z,z'),
    \end{equation}
    since $f(\Psi_n)=X_n/Y_n=\hatpRP^\pi(z,z')$,\footnote{To be more precise $f(\Psi_n)=X_n/Y_n$ when $Y_n \neq 0$. However, even when $Y_n=0$, the conclusion that $f(\Psi_n)=\hatpRP^\pi(z,z')$ holds from the definition of $f$ in Equation \ref{eq:appendixFDefn}.} and $\pRP^\pi(z,z')=\frac{\sumt \Pr(Z_t = z, Z_{t+1} = z'; \pi)}{\sumt \Pr(Z_t=z;\pi)}$ by Equation \ref{eq:arp}. 
    Restating this conclusion, we have that if 
    \begin{equation}
        \sumt \Pr(Z_t=z;\pi) \neq 0,
    \end{equation} 
    then
    \begin{equation}
        \hatpRP^\pi(z,z')\asto \pRP^\pi(z,z'),
    \end{equation}
    which establishes the property.
\end{proof}

Next, we present a property that establishes the almost sure convergence of the reward function estimate.
\begin{property}
\label{prop:rConvergence}
    For all abstract states $z \in \mathcal Z$, if 
    \begin{equation}
        \sumt \Pr(Z_t=z;\pi) \neq 0,
    \end{equation} 
    then
    \begin{equation}
        \hatrRP^\pi(z) \asto \rRP^\pi(z).
    \end{equation}
\end{property}
\begin{proof}
Recall that 
\[
    \hatrRP^\pi(z) = \frac{\sum_{i=1}^n\sum_{t} \I{Z^i_t = z} R^i_t }{\sum_{i=1}^n\sum_{t} \I{Z^i_t = z}}.
\]
        
Let $X_n \coloneqq \frac{1}{n}\sum_{i=1}^n\sum_{t} \I{Z^i_t = z} R^i_t$ and $Y_n \coloneqq \frac{1}{n} \sum_{i=1}^n\sum_{t} \I{Z^i_t = z}$. We will show that $X_n$ and $Y_n$ converge almost surely, and then apply the continuous mapping theorem to reason about the almost sure convergence of $\hatrRP^\pi(z)$.

To ensure that the ARP is well-defined even when $\tilde Z$ is not empty, for abstract states $\tilde z \in \tilde Z$ (abstract states that were not observed in the data), the reward function can be set to $\hatrRP^\pi(\Tilde{z}) = 0$, as elaborated in the proof of Property \ref{prop:pConvergence}.

Consider $\lim_{n \to \infty}X_n$, 
\[\begin{split}
            \lim_{n \to \infty}X_n =& \lim_{n \to \infty} \frac{1}{n}\sum_{i=1}^n\sum_{t} \I{Z^i_t = z} R^i_t\\
            =& \lim_{n \to \infty} \sum_t \left(\frac{1}{n}\sum_{i=1}^n  \I{Z^i_t = z} R^i_t \right). \\
    \end{split}\]
Note that $\I{Z^i_t = z} R^i_t$ is independent across episodes (for each $i$), has bounded variance, and has the same mean for each episode. By Kolmogorov's strong law of large numbers \cite{sen1994large},
\[\begin{split}
            \frac{1}{n}\sum_{i=1}^n  \I{Z^i_t = z} R^i_t \asto& \E\left[ \I{Z_t = z} R_t; \pi \right] \\ 
            =& \sum_{r \in \mathbb{R}, \breve{z} \in \mathcal{Z}} \Pr(R_t = r, Z_t = \breve{z}; \pi) \I{Z_t = z} r \\
            =& \sum_{r \in \mathbb{R}} \Pr(R_t = r, Z_t = z; \pi) r \\
            =& \left( \sum_{r \in \mathbb{R}} r \Pr(R_t = r \mid Z_t = z ; \pi) \right) \Pr(Z_t = z ;\pi)  \\
            =& \E\left[R_t \mid Z_t = z ;\pi\right] \Pr(Z_t = z ;\pi).
    \end{split}\]
This convergence guarantee holds for all $t$. As in the proof of Property \ref{prop:pConvergence}, by the countable additivity property of probability measures, in conjunction with the dominated convergence theorem \cite[Theorem 1.19]{gariepy2015measure}, this implies that,
\begin{equation}
            \Pr\left(\lim_{n \to \infty} \sumt \frac{1}{n}\sum_{i=1}^n  \I{Z^i_t = z} R^i_t = \sumt \E\left[R_t \mid Z_t = z ;\pi \right] \Pr(Z_t = z ;\pi) \right) = 1.
\end{equation}
Returning to $\asto$ notation, this means that 
\begin{equation}
        X_n \asto \sumt \E\left[R_t \mid Z_t = z ;\pi \right] \Pr(Z_t = z ;\pi).
\end{equation}
Similarly, these same exact steps can be followed to show that 
\begin{equation}
    Y_n \asto \sumt \Pr(Z_t=z;\pi).
\end{equation}
Let $\Psi_i=(X_i,Y_i)$, so that $\Psi_n=(X_i,Y_i)_{i=1}^n$ is a sequence of vector-valued random variables. Considering the function $f:\mathbb R^2 \to \mathbb R$, defined as:
\begin{equation}
    f(x,y)=\begin{cases}
        \frac{x}{y} &\mbox{if } y \neq 0\vspace{0.1cm}\\
        \hatrRP^\pi(z) &\mbox{otherwise.}
    \end{cases}
\end{equation} 
Note that discontinuities of $f$ may occur when $y=0$. Applying the continuous mapping theorem, we have that 
\begin{equation}
    f(\Psi_n) \asto \frac{\sumt \E\left[R_t \mid Z_t = z ;\pi \right] \Pr(Z_t = z ;\pi)}{\sumt \Pr(Z_t=z;\pi)}
\end{equation}
if 
\begin{equation}
    \sumt \Pr(Z_t=z;\pi) \neq 0.
\end{equation} 
Thus, when $ \sumt \Pr(Z_t=z;\pi) \neq 0$
\begin{equation}
    \hatrRP^\pi(z) \asto \rRP^\pi(z),
\end{equation}
which establishes the property.
\end{proof}

Next, we present a property that establishes the almost sure convergence of the initial state distribution.
\begin{property}
\label{prop:etaConvergence}
    For all abstract states $z \in \mathcal Z$, $\hatdzeroRP(z) \asto \dzeroRP(z)$.
\end{property}
\begin{proof}
    Recall that $$\hatdzeroRP(z) = \frac{1}{n} \sum_{i=1}^n \I{Z^i_0 = z}.$$
    Let $X_n \coloneqq \frac{1}{n} \sum_{i=1}^n \I{Z^i_0 = z}$. 
    We will show that $X_n$ converges almost surely, and then apply the continuous mapping theorem to reason about the almost sure convergence of $\hatdzeroRP(z)$.
    Note that even when $\tilde Z$ is not empty, the definition of $\hatdzeroRP^\pi$ ensures that it is well-defined.

    Consider $\lim_{n \to \infty}X_n$,
    \[\begin{split}
            \lim_{n \to \infty}X_n =& \lim_{n \to \infty} \frac{1}{n} \sum_{i=1}^n \I{Z^i_0 = z}.
        \end{split}\]
    Note that $\I{Z^i_0 = z}$ is independent across episodes (for each $i$), has bounded variance, and has the same mean for each episode. By Kolmogorov's strong law of large numbers \cite{sen1994large},
    \[\begin{split}
            \frac{1}{n} \I{Z^i_0 = z} &\asto \E\left[ \I{Z_0 = z} \right] = \Pr(Z_0 = z).
    \end{split}\]
    In contrast to the proofs of the previous two properties, this proof holds for all $z \in \mathcal Z$, not just the abstract states that have non-zero probability of occuring.
    This implies that
    \begin{equation}
        \hatdzeroRP(z) \asto \dzeroRP(z),
    \end{equation}
    which establishes the property.
\end{proof}

Having established properties that will be useful in the proof of Lemma \ref{thm:onpolicy}, we now turn to proving the lemma. 
\begingroup
\renewcommand{\thelemma}{\ref{thm:onpolicy}}
\begin{lemma}
    $\forall \phi \in \Phi$, the expected return of the maximum likelihood estimate $\hatRP^\pi$ converges almost surely to the expected return of the policy $\pi$, i.e., $J(\pi; \hatRP^\pi) \asto J(\pi; M).$
\end{lemma}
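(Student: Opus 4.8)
The plan is to lift the component-wise almost-sure convergence established in Properties \ref{prop:pConvergence}, \ref{prop:rConvergence}, and \ref{prop:etaConvergence} up to the level of the expected return, and then invoke Theorem \ref{prop:ARP_pe} to identify the limit with $J(\pi; M)$. First I would assemble the three properties into a single joint convergence statement for the estimated ARP. For every abstract state $z$ that is actually visited under $\pi$ (i.e., $\sumt \Pr(Z_t = z; \pi) \neq 0$), Properties \ref{prop:pConvergence} and \ref{prop:rConvergence} give $\hatpRP^\pi(z,z') \asto \pRP^\pi(z,z')$ and $\hatrRP^\pi(z) \asto \rRP^\pi(z)$, while Property \ref{prop:etaConvergence} gives $\hatdzeroRP(z) \asto \dzeroRP(z)$ for all $z$. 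Since $\mathcal{Z}$ is finite, there are only finitely many such entries, and a finite intersection of probability-one events again has probability one; hence the full tuple $(\hatpRP^\pi, \hatrRP^\pi, \hatdzeroRP)$ converges almost surely to $(\pRP^\pi, \rRP^\pi, \dzeroRP)$. The unreachable abstract states $\tilde Z$ are dealt with by the hardcoded values introduced in the proof of Property \ref{prop:pConvergence}: each such $\tilde z$ has $\dzeroRP(\tilde z) = 0$ and is unreachable from any visited state, so it contributes nothing to the return, and its estimated entries---which need not converge to anything meaningful---are irrelevant.

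Next I would express the expected return as a fixed continuous function of the ARP's components. Using the termination-aware closed form from Appendix \ref{app:notes_mrp_arp}, define $g(P, R, \eta) := \left(\mathrm{I} - (\mathrm{I} - \text{diag}(\beta_\phi^\pi)) P\right)^{-1} R\, \eta$, so that $J(\pi; \RP^\pi) = g(\pRP^\pi, \rRP^\pi, \dzeroRP)$ (equivalently the form $(\mathrm{I} - \pRP^\pi)^{-1} \rRP^\pi \dzeroRP$ under the absorbing-state convention). Matrix inversion is continuous at every invertible matrix, so $g$ is continuous at any argument for which $\mathrm{I} - (\mathrm{I} - \text{diag}(\beta_\phi^\pi)) P$ is nonsingular. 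At the true ARP this matrix is invertible---exactly the fact recorded at the end of Appendix \ref{app:notes_mrp_arp}, which holds because $J(\pi; \RP^\pi) = J(\pi; M)$ is finite.

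Finally I would apply the continuous mapping theorem (stated in Appendix \ref{app:prelimimnaries}). The limit $(\pRP^\pi, \rRP^\pi, \dzeroRP)$ is a deterministic constant at which $g$ is continuous, so the discontinuity set of $g$ is hit with probability zero; therefore $g(\hatpRP^\pi, \hatrRP^\pi, \hatdzeroRP) \asto g(\pRP^\pi, \rRP^\pi, \dzeroRP)$, i.e., $J(\pi; \hatRP^\pi) \asto J(\pi; \RP^\pi)$. Theorem \ref{prop:ARP_pe} then gives $J(\pi; \RP^\pi) = J(\pi; M)$, completing the argument. I expect the main obstacle to be the middle step: one must argue that the return map, built from a matrix inverse, is continuous \emph{precisely} at the ground-truth ARP, and simultaneously confirm that the unreached abstract states do not spoil either the continuity or the value of $g$. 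Establishing nonsingularity of $\mathrm{I} - (\mathrm{I} - \text{diag}(\beta_\phi^\pi)) \pRP^\pi$ at the truth is what licenses the application of the continuous mapping theorem.
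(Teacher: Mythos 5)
Your proposal is correct and takes essentially the same route as the paper's own proof: both lift the component-wise almost-sure convergence of Properties \ref{prop:pConvergence}, \ref{prop:rConvergence}, and \ref{prop:etaConvergence} through the continuous mapping theorem (with the hardcoded values rendering unvisited abstract states harmless) and then invoke Theorem \ref{prop:ARP_pe} to identify the limit with $J(\pi; M)$. The only bookkeeping difference is that the paper applies the continuous mapping theorem to the per-state visitation terms $\sumt \hat{\Pr}(Z_t = z; \pi)$ and sums them against $\hatrRP^\pi(z)$, splitting off the zero-contribution states explicitly, whereas you apply it once to the full matrix-inverse return map---whose continuity at the true ARP (the nonsingularity point you flag as the main obstacle) is the same issue the paper defers to Appendix \ref{app:notes_mrp_arp}.
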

\endgroup
\begin{proof}
    From Theorem \ref{prop:ARP_pe}, we have that $J(\pi; \RP^\pi) = J(\pi; M)$.
    Thus, to prove this result, we only need to show that $J(\pi; \hatRP^\pi) \asto J(\pi; \RP^\pi)$. 
    Several of the preliminary results required to estalish this result were provided in Properties \ref{prop:pConvergence}, \ref{prop:rConvergence}, and \ref{prop:etaConvergence}. Specifically, Properties \ref{prop:pConvergence} and \ref{prop:rConvergence} establish that for all abstract states $z \in \mathcal Z$ and $z' \in \mathcal Z$, if 
    \begin{equation}
        \sumt \Pr(Z_t=z;\pi) \neq 0,
    \end{equation} 
    then 
    \begin{equation}
        \hatpRP^\pi(z,z')\asto \pRP^\pi(z,z'),
    \end{equation}
    and
    \begin{equation}
        \hatrRP^\pi(z) \asto \rRP^\pi(z).
    \end{equation}
    Similarly, Property \ref{prop:etaConvergence} establishes that for all abstract states $z \in \mathcal Z$, %
    \begin{equation}
         \hatdzeroRP(z) \asto \dzeroRP(z).
    \end{equation}

    The expected return  $J(\pi; \hatRP^\pi)$ is a continuous function of $\hatRP^\pi \coloneqq (\hatpRP^\pi, \hatrRP^\pi, \hatdzeroRP)$, given by $J(\pi; \hatRP^\pi) =  \left(\mathrm{I} - \hatpRP^\pi\right)^{-1} \hatrRP^\pi \hatdzeroRP = \sum_z \sumt \hat{\Pr}(Z_t = z; \pi) \hatrRP^\pi(z)$\footnote{Termination is handled as detailed in Appendix \ref{app:notes_mrp_arp}.}
    where $\hat{\Pr}$ denotes the empirical estimate of the corresponding probability. 
    
    Note that the term $\sumt \hat{\Pr}(Z_t = z; \pi) = \left[\left(\mathrm{I} - \hatpRP^\pi\right)^{-1} \hatdzeroRP \right]_z$, i.e., the empirical estimate of the sum of probabilities of encountering the abstract state $z$ over all timesteps is equal to value of the vector  $\left(\mathrm{I} - \hatpRP^\pi\right)^{-1} \hatdzeroRP$ at $z$.
    By the continuous mapping theorem, if $\sumt \Pr(Z_t = z; \pi) \neq 0$, 
    \begin{equation}
        \sumt \hat{\Pr}(Z_t = z; \pi) \asto \sumt \Pr(Z_t = z; \pi).
        \label{eq:absvisitationConvergence}
    \end{equation}
    Let $\Bar{Z} = \{z:  \sumt \Pr(Z_t = z; \pi) = 0 \}$, where $\Bar{Z} \subset \mathcal{Z}$, be the set of abstract states that will never be observed empirically as they have no probability of being visited under $\pi$.\footnote{In relation to $\Tilde{Z}$ defined in Property \ref{prop:pConvergence}, note $\Bar{Z} \subseteq \Tilde{Z}$. As $n \to \infty$ the two sets, with probability 1, become equal.}
    The expression for the expected return can be divided into two terms: (1) the first term sums over abstract states for which $\sumt \Pr(Z_t = z; \pi) = 0$, and the values of the components of the ARP---in particular, $\hatrRP^\pi$---have been specially hardcoded as detailed in Property \ref{prop:pConvergence}, and (2) the second term sums over the abstract states for which $\sumt \Pr(Z_t = z; \pi) \neq 0$ and thus Equation \ref{eq:absvisitationConvergence} holds. 
    \begin{align}
         J(\pi; \hatRP^\pi) =& \sum_{z \in \Bar{Z}} \sumt \hat{\Pr}(Z_t = z; \pi) \underbrace{\hatrRP^\pi(z)}_{=0} + \sum_{z \in \mathcal{Z} \setminus \Bar{Z}} \sumt \hat{\Pr}(Z_t = z; \pi) \hatrRP^\pi(z)\\
         =& \sum_{z \in \mathcal{Z} \setminus \Bar{Z}} \sumt \hat{\Pr}(Z_t = z; \pi) \hatrRP^\pi(z)
     \end{align}
     As detailed in Property \ref{prop:pConvergence}, $\hatrRP^\pi(z) = 0$ for $z \in \Tilde{Z}$ and thus $\forall z \in \Bar{Z}$, making the first term equal to zero. The constituents of the second term converge almost surely:
     \begin{align}
         \sum_{z \in \mathcal{Z} \setminus \Bar{Z}} \sumt \hat{\Pr}(Z_t = z; \pi) \hatrRP^\pi(z) \asto& \sum_{z \in \mathcal{Z} \setminus \Bar{Z}} \sumt \Pr(Z_t= z; \pi) \rRP^\pi(z)\\
         =& \sum_{z \in \mathcal{Z} \setminus \Bar{Z}} \sumt \Pr(Z_t= z; \pi) \rRP^\pi(z) + 0 \\
         =& \sum_{z \in \mathcal{Z} \setminus \Bar{Z}} \sumt \Pr(Z_t= z; \pi) \rRP^\pi(z) + \sum_{z \in \Bar{Z}} \underbrace{\left(\sumt \Pr(Z_t= z; \pi)\right)}_{=0} \rRP^\pi(z)\\
         =& \sum_{z \in \mathcal{Z}} \sumt \Pr(Z_t= z; \pi) \rRP^\pi(z) \\
         =& J(\pi; \RP^\pi).
    \end{align}
    We then obtain the final result:
    \begin{equation}
        J(\pi; \hatRP^\pi) \asto J(\pi; \RP^\pi) = J(\pi; M).
    \end{equation}
\end{proof}

\subsection{Estimation of ARPs from Off-Policy Data}
\label{app:lemm33}

So far, we have shown that the expected return of the estimated ARP converges almost surely to the expected return of the policy that induced it, i.e., consistent \textit{on-policy} evaluation. Next, we show that a model of the ARP can be consistently estimated from \textit{off-policy} data.
This requires us to prove that transition functions estimated using off-policy data, incorporating importance weights in their estimation, converge almost surely to the true transition functions induced by the evaluation policy, and that the same holds for the reward functions and initial state distributions. 

\begin{property}
\label{prop:offpConvergence}
    For all abstract states $z \in \mathcal Z$ and $z' \in \mathcal Z$, if 
    \begin{equation}
        \sumt \Pr(Z_t = z ; \pi_b) \neq 0,
    \end{equation} 
    then
    \begin{equation}
        \offphatpRP(z,z')\asto \pRP^{\pi_e}(z,z').
    \end{equation}
\end{property}
\begin{proof}
    Recall that
    \[\begin{split}
            \offphatpRP(z,z') = \frac{\sumnt \Iti{z, z'} \rho_{0:t} }{\sumnt \Iti{z}  \rho_{0:t}}.
        \end{split}
    \]
    Similar to the proof structure of Property \ref{prop:pConvergence}, let $X_n \coloneqq \frac{1}{n}\sumnt \Iti{z, z'} \rho_{0:t}$ and $Y_n \coloneqq \frac{1}{n}\sumnt \Iti{z} \rho_{0:t}$.
    For abstract states $\Tilde{Z} \subset \mathcal{Z}$ that are never reached and thus are not observed in the data, special values can be hardcoded into the transition function, reward functions, and initial state distribution so that the components of the ARP continue to be well-defined. In particular, for $\Tilde{z} \in \Tilde{Z}$, the transition function can be set to self-transition back to $\Tilde{z}$ with probability 1, i.e., $\offphatpRP(\Tilde{z}, \Tilde{z}) = 1$ and $\offphatpRP(\Tilde{z}, \bar{z}) = 0$ for all $\bar{z} \notin \Tilde{Z}$, the reward function is set to $\offphatrRP(\Tilde{z}) = 0$ and the initial abstract state distribution is $\offphatdzeroRP(\Tilde{z}) = 0$ for all $\Tilde{z} \in \Tilde{Z}$.

    Consider $\lim_{n \to \infty}X_n$,
    \[\begin{split}
            \lim_{n \to \infty}X_n =& \lim_{n \to \infty} \frac{1}{n}\sumnt \Iti{z, z'} \rho_{0:t}\\
            =& \lim_{n \to \infty} \sumt \left(\frac{1}{n}\sum_{i=1}^n  \Iti{z, z'} \rho_{0:t} \right). \\
        \end{split}\]
    Note that $\Iti{z, z'} \rho_{0:t}$ is independent across episodes (for each $i$), has bounded variance, and has the same mean for each episode. By Kolmogorov's strong law of large numbers \cite{sen1994large},
    \[\begin{split}
            \frac{1}{n}\sum_{i=1}^n  \Iti{z, z'} \rho_{0:t} &\asto \E\left[ \I{Z_t = z, Z_{t+1} = z'} \rho_{0:t} ; \pi_b \right].
        \end{split}\]
    The importance weights change the distribution over which the expectation is computed, as below:
    \[\begin{split}
            & \E\left[\I{Z_t = z, Z_{t+1} = z'} {\color{orange}\rho_{0:t}}; {\color{orange}\pi_b} \right]\\
            &= \sumt \sum_{\substack{S_{0:t+1},\\ A_{0:t}}} \left( \dzero(S_0) \prod_{l=0}^{t} {\color{blue}\pi_b(S_l, A_l)} p(S_l, A_l, S_{l+1}) \right) \I{\phi(S_t) {=} z, \phi(S_{t+1}) {=} z'} \left( {\prod_{j=0}^t \frac{\pi_e(S_j, A_j)}{\color{blue}\pi_b(S_j, A_j)}} \right) \\
            &= \sumt \sum_{\substack{S_{0:t+1},\\ A_{0:t}}} \left( \dzero(S_0) \prod_{l=0}^{t} {\color{blue}\pi_e(S_l, A_l)} p(S_l, A_l, S_{l+1}) \right) \I{\phi(S_t) {=} z, \phi(S_{t+1}) {=} z'} \\
            &= \sumt \E\left[\I{Z_t = z, Z_{t+1} = z'}; {\color{blue}\pi_e} \right]
        \end{split}\]
    This convergence guarantee holds for all $t$. By the countable additivity property of probability measures in conjunction with the dominated convergence theorem \cite{gariepy2015measure}, this implies that,
    \begin{equation}
        X_n \asto \sumt \E\left[\I{Z_t = z, Z_{t+1} = z'}; {\pi_e} \right].
    \end{equation}
    Similarly, these same exact steps can be followed to show that 
    \begin{equation}
        Y_n \asto \sumt \E\left[\I{Z_t = z}; {\pi_e} \right].
    \end{equation}
    Let $\Psi_i=(X_i,Y_i)$, so that $\Psi_n=(X_i,Y_i)_{i=1}^n$ is a sequence of vector-valued random variables. Considering the function $f:\mathbb R^2 \to \mathbb R$, defined as:
    \begin{equation}
        f(x,y)=\begin{cases}
            \frac{x}{y} &\mbox{if } y \neq 0\vspace{0.1cm}\\
            \offphatpRP(z,z') &\mbox{otherwise.}
        \end{cases}
    \end{equation} 
    Note that discontinuities of $f$ may occur when $y=0$. Applying the continuous mapping theorem, we have that 
    \begin{equation}
        f(\Psi_n) \asto \frac{\sumt \E\left[\I{Z_t = z, Z_{t+1} = z'}; {\pi_e} \right]}{\sumt \E\left[\I{Z_t = z}; {\pi_e} \right]}
    \end{equation}
    if $\sumt \E\left[\I{Z_t = z}; {\pi_e} \right] \neq 0$.
    This implies that 
    \begin{equation}
        \offphatpRP(z,z') \asto \pRP^{\pi_e}(z,z'),
    \end{equation}
    when $\sumt \Pr(Z_t = z ; \pi_b) \neq 0$, which establishes the property.

\end{proof}

Next, we present the properties that establish almost sure convergence of the reward function estimate and the initial state distribution estimated from off-policy data. This follows a similar proof structure to their corresponding on-policy properties, with the key difference being the use of importance weights in the estimation.

\begin{property}
\label{prop:offrConvergence}
    For all abstract states $z \in \mathcal Z$, if 
    \begin{equation}
        \sumt \Pr(Z_t=z;\pi_b) \neq 0,
    \end{equation} 
    then
    \begin{equation}
        \offphatrRP(z) \asto \rRP^{\pi_e}(z).
    \end{equation}
\end{property}
\begin{proof}
    Recall that 
    \[
        \offphatrRP(z) = \frac{\sum_{i=1}^n\sum_{t} \I{Z^i_t = z} R^i_t \rho_{0:t}}{\sum_{i=1}^n\sum_{t} \I{Z^i_t = z} \rho_{0:t}}.
    \]
    Let $X_n \coloneqq \frac{1}{n}\sum_{i=1}^n\sum_{t} \I{Z^i_t = z} R^i_t \rho_{0:t}$ and $Y_n \coloneqq \frac{1}{n} \sum_{i=1}^n\sum_{t} \I{Z^i_t = z} \rho_{0:t}$. 
    Following the exact steps from Property \ref{prop:rConvergence}, we can show that 
    \[
    \begin{split}
        \frac{1}{n} \sum_{i=1}^n \I{Z_t^i = z} R^i_t \asto& \E\left[\I{Z_t = z} R_t {\color{orange}\rho_{0:t}} ; {\color{orange}\pi_b} \right]\\
        =& \E\left[\I{Z_t = z} R_t ; {\color{orange}\pi_e} \right]\\
        =& \sum_{r \in \mathbb{R}, \breve{z} \in \mathcal{Z}} \Pr(R_t = r, Z_t = \breve{z}; \pi_e) \I{Z_t = z} r \\
        =& \sum_{r \in \mathbb{R}} \Pr(R_t = r, Z_t = z; \pi_e) r \\
        =& \left( \sum_{r \in \mathbb{R}} r \Pr(R_t = r \mid Z_t = z ; \pi_e) \right) \Pr(Z_t = z ;\pi)  \\
        =& \E\left[R_t \mid Z_t = z ;\pi\right] \Pr(Z_t = z ;\pi_e).
    \end{split}\]
    This convergence guarantee holds for all $t$. By the countable additivity property of probability measures, in conjunction with the dominated convergence theorem, this implies that
    \begin{equation}
        X_n \asto \sumt \E\left[R_t \mid Z_t = z ;\pi_e \right] \Pr(Z_t = z ;\pi_e).
    \end{equation}
    Similarly, these same exact steps can be followed to show that 
    \begin{equation}
        Y_n \asto \sumt \Pr(Z_t=z;\pi_e).
    \end{equation}
    Noting that $\offphatrRP = \frac{X_n}{Y_n}$, we can apply the continuous mapping theorem, as done in the proof of Property \ref{prop:rConvergence}, to show that 
    \begin{equation}
        \offphatrRP(z) \asto \rRP^{\pi_e}(z),
    \end{equation}
    when $\sumt \Pr(Z_t = z ; \pi_b) \neq 0$.
\end{proof}

\begin{property}
\label{prop:offetaConvergence}
    For all abstract states $z \in \mathcal Z$, $\offphatdzeroRP(z) \asto \dzeroRP^{\pi_e}(z)$.
\end{property}
\begin{proof}
    Recall that 
    \[
        \offphatdzeroRP(z) = \frac{1}{n} \sum_{i=1}^n \I{Z^i_0 = z} \rho_{0}.
    \]
    Let $X_n \coloneqq \frac{1}{n} \sum_{i=1}^n \I{Z^i_0 = z} \rho_{0}$. 
    Following the exact steps from Property \ref{prop:etaConvergence}, we can show that 
    \[
    \begin{split}
        \frac{1}{n} \I{Z_t^i = z} &\asto \E\left[\I{Z_0 = z} \rho_0 \right] = \Pr(Z_0 = z).
    \end{split}
    \]
    This proof holds for all $z \in \mathcal Z$, not just the abstract states that have non-zero probability of occuring. This implies that
    \begin{equation}
        \offphatdzeroRP(z) \asto \dzeroRP^{\pi_e}(z).
    \end{equation}
\end{proof}

\begingroup
\renewcommand{\thelemma}{\ref{thm:offpARP}}
\begin{lemma}
    Under Assumption \ref{ass:support}, the weighted maximum likelihood estimate $\offphatRP$ converges almost surely to the ground-truth ARP $\RP^{\pi_e}$, i.e., $\offphatRP \asto \RP^{\pi_e}.$
\end{lemma}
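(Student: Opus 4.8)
The plan is to reduce the claimed convergence of the whole ARP tuple to the entrywise convergence of its three components, which has essentially already been established in Properties \ref{prop:offpConvergence}, \ref{prop:offrConvergence}, and \ref{prop:offetaConvergence}. Since $\offphatRP = (\mathcal{Z}, \offphatpRP, \offphatrRP, \offphatdzeroRP)$ and $\RP^{\pi_e} = (\mathcal{Z}, \pRP^{\pi_e}, \rRP^{\pi_e}, \dzeroRP)$ share the same fixed finite abstract-state space $\mathcal{Z}$, establishing $\offphatRP \asto \RP^{\pi_e}$ amounts to showing that every entry of the transition matrix, reward vector, and initial distribution converges almost surely to its ground-truth counterpart.

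First I would invoke the three off-policy properties to obtain entrywise convergence on the set of abstract states reachable under the behavior policy, i.e., those $z$ with $\sumt \Pr(Z_t = z; \pi_b) \neq 0$: Property \ref{prop:offpConvergence} gives $\offphatpRP(z,z') \asto \pRP^{\pi_e}(z,z')$, Property \ref{prop:offrConvergence} gives $\offphatrRP(z) \asto \rRP^{\pi_e}(z)$, and Property \ref{prop:offetaConvergence} gives $\offphatdzeroRP(z) \asto \dzeroRP(z)$ for all $z$. Here Assumption \ref{ass:support} is what guarantees the importance weights $\rho_{0:t}$ are bounded, which is the ingredient the strong law of large numbers needs inside those property proofs.

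Second, I would handle the remaining, $\pi_b$-unreachable abstract states. By Assumption \ref{ass:support}, the support of $\pi_e$ is contained in that of $\pi_b$, so every trajectory with positive probability under $\pi_e$ has positive probability under $\pi_b$; taking the contrapositive at the level of abstract-state visitation, any $z$ with $\sumt \Pr(Z_t = z; \pi_b) = 0$ also satisfies $\sumt \Pr(Z_t = z; \pi_e) = 0$. Such states are therefore never visited under either policy, and both $\offphatRP$ and $\RP^{\pi_e}$ assign them the same deterministic hardcoded values (self-transition with probability one, zero reward, zero initial mass) as described in the proof of Property \ref{prop:offpConvergence}; for these entries convergence is immediate because both sides are constant and equal.

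Finally, I would combine the entrywise statements. Because $\mathcal{Z}$ is finite, the transition matrix, reward vector, and initial distribution together have only finitely many entries, and each converges on a probability-one event. The intersection of finitely many probability-one events again has probability one, so on this common event every entry converges simultaneously, which is exactly $\offphatRP \asto \RP^{\pi_e}$. I expect the main obstacle to be the careful bookkeeping of the unreachable abstract states---in particular, arguing via Assumption \ref{ass:support} that the $\pi_b$-unreachable states coincide, up to the hardcoding convention, with states that are irrelevant to $\RP^{\pi_e}$---together with the (routine but necessary) step of passing from the marginal almost-sure convergence of each entry to joint almost-sure convergence of the full tuple using finiteness of $\mathcal{Z}$.
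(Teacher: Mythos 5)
Your proposal is correct and follows essentially the same route as the paper's own proof: invoke Properties \ref{prop:offpConvergence}, \ref{prop:offrConvergence}, and \ref{prop:offetaConvergence} for entrywise almost-sure convergence on $\pi_b$-reachable abstract states, and rely on the hardcoded conventions for the unreached states in $\Tilde{Z}$. Your write-up is in fact somewhat more careful than the paper's terse version---explicitly noting via Assumption \ref{ass:support} that $\pi_b$-unreachable states are also $\pi_e$-unreachable, and spelling out the finite intersection of probability-one events needed for joint convergence of the tuple---but these are elaborations of the same argument, not a different one.
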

\endgroup
\begin{proof}
    The results required to establish this result are provided in Properties \ref{prop:offpConvergence}, \ref{prop:offrConvergence}, and \ref{prop:offetaConvergence}. Specifically, the properties establish that if 
    \[ \sumt \Pr(Z_t=z;\pi_b) \neq 0,\]
    then
    \[ \offphatpRP(z,z') \asto \pRP^{\pi_e}(z,z'); \quad \offphatrRP(z) \asto \rRP^{\pi_e}(z); \quad \offphatdzeroRP(z) \asto \dzeroRP(z),\]
    giving the final result $\offphatRP \asto \RP^{\pi_e}$, while the values of the components for $z \in \Tilde{Z}$ are hardcoded as detailed in Property \ref{prop:offpConvergence}.

\end{proof}

\subsection{Off-Policy Evaluation with ARPs}
\label{app:thm41}

\begingroup
\renewcommand{\thetheorem}{\ref{thm:ope_ARP}}
\begin{theorem}
The expected return of the ARP $\offphatRP$ (built from off-policy data) converges almost surely to the expected return of $\pi_e$, i.e., $J(\pi_e; \offphatRP) \asto J(\pi_e; M).$
\endgroup
\begin{proof}
    The expected return $J(\pi_e; \offphatRP)$ is a continuous function of $\offphatRP \coloneqq (\offphatpRP, \offphatrRP, \offphatdzeroRP)$, given by $J(\pi_e; \offphatRP) = \left(\mathrm{I} - \offphatpRP\right)^{-1} \offphatrRP \offphatdzeroRP$.
    Similar to the proof of Lemma \ref{thm:onpolicy}, by the continuous mapping theorem and Lemma \ref{thmt@@offpARP}, we obtain the final result that enables off-policy evaluation with ARPs:
    \[J(\pi_e; \offphatRP) \asto J(\pi_e; \RP^{\pi_e}) = J(\pi_e; M).\]
\end{proof}

\subsection{Variance Reduction: Leveraging \textit{Markovness} of the State Abstraction}
\label{app:thm43}

Variance in estimation of the model of the ARP from off-policy can be reduced by clipping the importance weights. We now show that when the state abstraction function is $c$-th order Markov, clipping the importance weights to the $c$ most recent terms continues to provide a consistent off-policy expected return estimate.
\begingroup
\renewcommand{\thetheorem}{\ref{thm:clipping}}
\begin{theorem}
Given a $c$-th order Markov $\phi$, the expected return of the abstract reward process $\offphatRPclip$ converges almost surely to the expected return of $\pi_e$, i.e., $J(\pi_e; \offphatRPclip) \asto J(\pi_e; M).$
\endgroup
\begin{proof}
    We only need to show that for a state abstraction function $\phi \in \Phi$ that is $c$-th order Markov,
    i.e., $$\Pr(\phi(S_{t+1}) | \phi(S_t), \phi(S_{t-1}), \ldots, \phi(S_{\clipt}); \pi) = \Pr(\phi(S_{t+1}) | \phi(S_t), \phi(S_{t-1}), \ldots, \phi(S_1), \phi(S_0); \pi),$$ for $\pi \in \{\pi_b, \pi_e\}$ the following holds:
    \[ \offphatRPclip \asto \RP^{\pi_e}. \]
    The remaining steps to show that the return estimates are consistent follow from Theorem \ref{thm:ope_ARP}.
    For brevity, we denote by $Z_t \coloneqq \phi(S_t)$ for the rest of the proof.
    Consider the expression for the transition function without any weight clipping,
    \[
        \offphatpRP(z, z') \coloneqq \frac{\sumnt \Iti{z, z'} \rho_{0:t} }{\sumnt \Iti{z}  \rho_{0:t}}.
    \]
    As shown in Property \ref{prop:offpConvergence}, if $\sumt \Pr(Z_t=z;\pi_b) \neq 0$, as $n \to \infty$ the numerator of $\offphatpRP(z, z')$ converges almost surely to $\sumt \E\left[\I{Z_t = z, Z_{t+1}=z'} \rho_{0:t} ;\pi_b\right]$.
    \newcommand{\cliptprev}{(t-c)^+}
    When the abstract states are $c$-th order Markov, 
    \begin{equation}
        \rho_{0:\cliptprev} \perp \rho_{\clipt:t} \textrm{, conditioned on } (Z_i)_{i=t-c+1}^t.
        \label{eq:rho_indep}
    \end{equation}
    Using the tower rule \cite{weiss2005course}, $\E\left[\I{Z_t = z, Z_{t+1}=z'} \rho_{0:t} ;\pi_b\right]$ can be re-written as:
    \[
        \begin{split}
            & \E\left[\I{Z_t = z, Z_{t+1}=z'} \rho_{0:t} ;\pi_b\right] \\
            =& \E\left[\I{Z_t = z, Z_{t+1}=z'} \rho_{0:\cliptprev}  \rho_{\clipt:t} ;\pi_b\right]    \\
            =& \E\left[ \E\left[\I{Z_t = z, Z_{t+1}=z'} \rho_{0:\cliptprev} \rho_{\clipt:t} \mid (Z_i)_{i=\clipt}^t ;\pi_b\right] ; \pi_b \right]\\
            \overset{(a)}{=}& \E\left[ \E\left[\I{Z_t = z, Z_{t+1}=z'} \rho_{\clipt:t} \mid (Z_i)_{i=\clipt}^t ;\pi_b\right] \E\left[ \rho_{0:\cliptprev} \mid (Z_i)_{i=\clipt}^t ; \pi_b\right]; \pi_b \right]\\
            =& \E\left[ \E\left[\I{Z_t = z, Z_{t+1}=z'} \rho_{\clipt:t} \mid (Z_i)_{i=\clipt}^t ;\pi_b\right]; \pi_b \right] \\ & \qquad \E\left[\E\left[ \rho_{0:\cliptprev} \mid (Z_i)_{i=\clipt}^t ; \pi_b\right] ; \pi_b\right]\\
            =& \E\left[\I{Z_t = z, Z_{t+1}=z'} \rho_{\clipt:t}  ;\pi_b \right] \underbrace{\E\left[ \rho_{0:\cliptprev} ;\pi_b \right]}_{=1}\\
            \overset{(b)}{=}& \Pr(Z_t = z, Z_{t+1}=z'; \pi_e),
        \end{split}
    \]
    where $(a)$ follows from Equation \ref{eq:rho_indep} and $(b)$ follows from Lemma \ref{thm:offpARP}.
    Thus when the abstract states are $c$-th order Markov, the numerator estimated with $c$-clipped importance weights almost surely converges to the same value as if the importance weights were not clipped.
    The value to which the denominator converges similarly undergoing a change of measure by the use of clipped importance weights, i.e.,
    \begin{equation}
        \E\left[\I{Z_t = z} \rho_{0:t} ;\pi_b\right] = \E\left[\I{Z_t = z} \rho_{\clipt:t} ; \pi_b \right] = \Pr(Z_t = z; \pi_e),
        \label{eq:denom}
    \end{equation}
    when $\phi$ is $c$-th order Markov.
    Note that the expression for $\offphatpRPclip$ is:
    \[
        \offphatpRPclip(z, z') \coloneqq \frac{\sumnt \Iti{z, z'} \rho_{\clipt:t} }{\sumnt \Iti{z}  \rho_{\clipt:t}}.
    \]
    Let $X_n \coloneqq \frac{1}{n}\sumnt \Iti{z, z'} \rho_{\clipt:t}$ and $Y_n \coloneqq \frac{1}{n}\sumnt \Iti{z} \rho_{\clipt:t}$.
    We can write $\offphatpRPclip(z, z') = \frac{X_n}{Y_n}$. Following the exact steps from Lemma \ref{thm:offpARP}, that entail a careful application of the continuous mapping theorem, we can show that
    \[
        \offphatpRPclip(z, z') \asto \pRP^{\pi_e}(z, z'),
    \]
    if $ \sumt \Pr(Z_t=z;\pi_b) \neq 0$. Similar derivations can be followed for the reward function and the initial state distribution, leading to the required result:
    \[
        \offphatRPclip \asto \RP^{\pi_e}
    \]
    when $ \sumt \Pr(Z_t=z;\pi_b) \neq 0$, that enables almost sure convergence of the expected return estimate:
    \[
        J(\pi_e; \offphatRPclip) \asto J(\pi_e; \RP^{\pi_e}) = J(\pi_e; M).
    \]
\end{proof}

\newpage
\section{Empirical Details}
\label{app:experiments}

In this section we provide additional empirical details for the experiments presented in the main paper.
An overall step-by-step algorithm for \methodname~is as follows:

\begin{algorithm}[H]
    \begin{algorithmic}
        \caption{Overview of \methodname($\phi$, $c$)}
        \STATE {\bfseries Input:}  $\pi_e$, $\pi_b$, $\Dn$%
        \STATE {1. Apply state abstraction to $\Dn$ and compute importance weights:}
        \STATE \qquad $\forall ~i, t:$ Store $\left(Z^{(i)}_t = \phi\left(S_t^{(i)}\right), A^{(i)}_t, R^{(i)}_t, \rho^{(i)}_{\clipt:t} = \prod_{j=\clipt}^t \frac{\pi_e(S^{(i)}_j, A^{(i)}_j)}{\pi_b(S^{(i)}_j, A^{(i)}_j)}\right)$
        \STATE {2. Estimate the components of the ARP $\offphatRPclip$:} %
        \STATE \qquad $\offphatRPclip = \left(
            \offphatpRPclip,
            \offphatrRPclip,
            \offphatdzeroRPclip
            \right)$ where,
        \STATE \qquad \qquad $\offphatpRPclip=\frac{\sumnt \I{\phi(S_t^{(i)}) = z, \phi(S_{t+1}^{(i)}) = z'} \rho_{\clipt:t} }{\sumnt \I{\phi(S_t^{(i)}) = z}  \rho_{\clipt:t}}$,
        \STATE \qquad \qquad $\offphatrRPclip=\frac{\sumnt \I{\phi(S_t^{(i)}) = z} \rho_{\clipt:t} R_t^{(i)}}{\sumnt \I{\phi(S_t^{(i)}) = z} \rho_{\clipt:t}}$,
        \STATE \qquad \qquad $\offphatdzeroRPclip(z) = \frac{\sumi \I{\phi(S_0^{(i)}) = z}}{n}$
        \STATE {3. Compute the expected return $J(\pi_e; \offphatRPclip)$ from the ARP.}
        \STATE \qquad $J(\pi_e; \offphatRPclip) \coloneqq (\mathrm{I} - \offphatpRPclip)^{-1} \offphatrRPclip \offphatdzeroRPclip$
        \STATE {\bfseries Output:} $J(\pi_e; \offphatRPclip)$
        \label{alg:method}
    \end{algorithmic}
\end{algorithm}

\subsection{Domains}
We perform empirical evaluations on a range of domains that consist of continuous domains and domains with large state spaces with long horizons. The domains are as follows:

\paragraph{CartPole}

The CartPole domain is a classic control problem from OpenAI Gym \cite{brockman2016openai}. The task is to balance a pole on a cart by moving the cart left or right. The state space is continuous, and the action space is discrete. We use the standard CartPole environment from OpenAI Gym.
In the experiments, $\pi_b$ is uniformly random, i.e., the left and right actions are each taken with probability 0.5. The evaluation policy $\pi_e$ is a policy that takes the action right with probability $0.9$ when the pole is leaning left, and right with probability $0.1$ when the pole is leaning right. This results in a policy that is not optimal, but is somewhat successful at balancing the pole.

\paragraph{ICU-Sepsis}

The ICU-Sepsis domain simulates the treatment of sepsis in the ICU. Built from the MIMIC-III database \cite{johnson2016mimic} and drawing from the analysis of \citet{komorowski2018artificial}, it consists of 747 states that denote the status of a patient and 25 possible actions that denote possible medical interventions. At the end of each episode, if the patient survives, a reward of +1 is given, while death corresponds to a reward of 0, with all intermediate rewards also being 0. This results in the expected return of a policy corresponding to the probability that a randomly selected patient will survive.
In the experiments, $\pi_e$ is set to an \textit{expert policy}, provided with the domain and included in the submitted codebase, while $\pi_b$ is a policy that is a more stochastic version of the expert policy constructed by temperature scaling \cite{guo2017calibration} the action probabilities of expert policy with a temperature parameter $\tau = 2$.

\paragraph{Asterix}
The Asterix domain is a miniaturized version of the Atari game Asterix where the task is to collect items while avoiding enemies. We use the implementation of the game from the MinAtar testbed \cite{young2019minatar}, where the dimension of each state is $10 \times 10 \times 4$, and the action set consists of six actions.
The data collecting policy $\pi_b$ is uniformly random, while the evaluation policy $\pi_e$ picks actions with non-uniform skewed probabilities.

\subsection{Additional Results for Estimator Selection}
\label{app:abstraction_discovery}

\begin{figure}[H]
    \includegraphics[width=\textwidth]{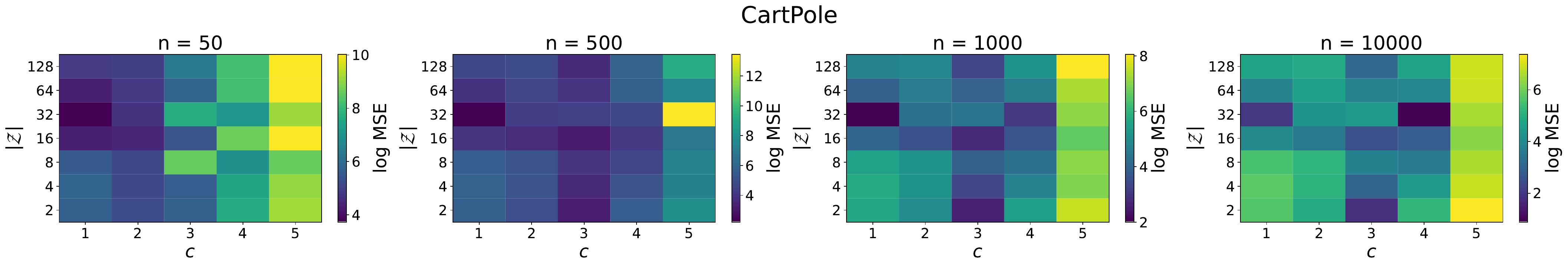}
    \includegraphics[width=\textwidth]{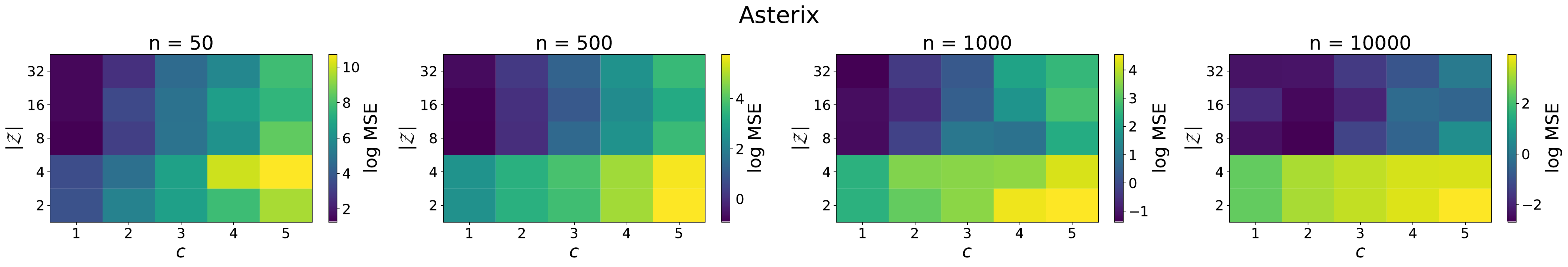}
    \includegraphics[width=\textwidth]{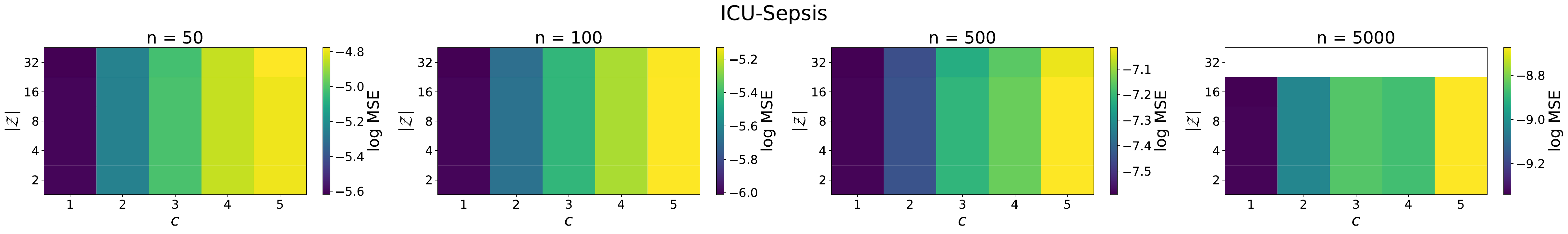}
    \caption{
        Heatmap of the log mean squared error (MSE) of the OPE estimators for the CartPole, Asterix, and ICU-Sepsis domains. The vertical axis represents the number of abstract states $|\mathcal{Z}|$, and the horizontal axis represents the value of the hyperparameter $c$. The color intensity indicates the log MSE, with lower values denoting better performance. Note that the variation with $|\mathcal{Z}|$ is strongly influenced by the class of abstraction functions used, which in this work is \CLU.}
    \label{fig:heatmap}
\end{figure}

For each domain, we evaluate the OPE performance of the estimated ARP induced by varying configurations of $(\phi, c)$.
For the class of abstraction function, we observe that the simple method \CLU~performs well across all domains, and hence we use it for all experiments. \CLU takes an input a single hyperparameter, the number of centroids initialized, denoted by $|\mathcal{Z}|$. We evaluate the following configurations of $\mathcal{Z}$ and $c$ for each domain:
\begin{enumerate}
    \item CartPole: 35 estimators - $|\mathcal{Z}| \in \{2, 4, 8, 16, 32, 64, 128\}$, $c \in \{1, 2, 3, 4, 5\}$.
    \item ICU-Sepsis: 25 estimators - $|\mathcal{Z}| \in \{2, 4, 8, 16, 32\}$, $c \in \{1, 2, 3, 4, 5\}$. ICU-Sepsis with $|\mathcal{Z}| = 32$ and $n=5000$ is excluded for computational reasons.
    \item Asterix: 25 estimators - $|\mathcal{Z}| \in \{2, 4, 8, 16, 32\}$, $c \in \{1, 2, 3, 4, 5\}$.
\end{enumerate}
In Figure \ref{fig:heatmap} we report the log MSE for each estimator.

\paragraph{Effect of $|\mathcal{Z}|$:}
For the class of abstraction functions induced by \CLU, the effect of $|\mathcal{Z}|$ varies across domains. In some domains, such as ICU-Sepsis, the estimators obtained by varying this hyperparameter show similar performance. However, in domains like Asterix and CartPole, performance is more sensitive to the choice of $|\mathcal{Z}|$, with larger values performing better.

\paragraph{Effect of $c$:}
It is expected that large values of $c$ will lead to higher variance of the estimates. In low data regimes, small values of $c$ result in relatively lower MSE as the variance is lowered at the cost of increased bias. As the amount of data increases, the best value of $c$ also increases. This effect is most pronounced in the Asterix domain, and to a lesser extent in the CartPole domain.

The key takeaway from Figure \ref{fig:heatmap} is that the optimal $|Z|$ and $c$ are a function of the amount of data ($n$), the characteristics of the domain and the class of abstraction functions used.

\subsection{Compute}
All experiments were run for 200 seeds each, on 3 domains in total. Each run took between 3 hours to 3 days (depending on the domain) and this duration includes offline data collection. The experiments were run using 32 threads on Xeon E5-2680 CPUs on a computing cluster, bringing the total compute time to roughly 45000 compute hours.

\end{document}